\relax
\documentclass[letterpaper]{article} 
\usepackage{aaai22}  
\usepackage{times}  
\usepackage{helvet}  
\usepackage{courier}  
\usepackage[hyphens]{url}  
\usepackage{graphicx} 
\urlstyle{rm} 
\usepackage{natbib}  
\usepackage{caption} 
\DeclareCaptionStyle{ruled}{labelfont=normalfont,labelsep=colon,strut=off} 
\frenchspacing  
\setlength{\pdfpagewidth}{8.5in}  
\setlength{\pdfpageheight}{11in}  
\usepackage{algorithm}
\usepackage{algorithmic}

%
\usepackage{newfloat}
\usepackage{listings}
\lstset{%
	basicstyle={\footnotesize\ttfamily},
	numbers=left,numberstyle=\footnotesize,xleftmargin=2em,
	aboveskip=0pt,belowskip=0pt,%
	showstringspaces=false,tabsize=2,breaklines=true}
\floatstyle{ruled}
\newfloat{listing}{tb}{lst}{}
\floatname{listing}{Listing}
\nocopyright
%
\pdfinfo{
/Title (Newron: A New Generalization of the Artificial Neuron to Enhance the Interpretability of Neural Networks)
/Author (Anonymous Submission for Review)
/TemplateVersion (2022.1)
}

\setcounter{secnumdepth}{1} 

\usepackage{amsmath}
\usepackage{multirow}
\usepackage{subcaption}
\usepackage{tikz}
\usepackage{xcolor}
\usepackage{dsfont} 
\usepackage{mathtools}

\definecolor{marsala}{rgb}{0.58, 0.27, 0.29}
\definecolor{brightmaroon}{rgb}{0.76, 0.13, 0.28}
\definecolor{chromeyellow}{rgb}{1.0, 0.65, 0.0}
\definecolor{mintgreen}{rgb}{0.6, 1.0, 0.6}

\usepackage{amssymb}
\usepackage{amsthm}

\newtheorem{theorem}{Theorem}[section]

\newtheorem{remark}{Remark}
\theoremstyle{definition}
\newtheorem{definition}{Definition}
\newtheorem{lemma}[definition]{Lemma}
\newtheorem{proposition}[definition]{Proposition}

\newcommand{\newron}{{\sc Newron}}
\newcommand{\ian}{{\sc IAN}}

%


\title{\newron: A New Generalization of the Artificial Neuron to Enhance the Interpretability of Neural Networks}
\author {
    Federico Siciliano, \textsuperscript{\rm 1}
    Maria Sofia Bucarelli, \textsuperscript{\rm 1}
    Gabriele Tolomei, \textsuperscript{\rm 2}
    Fabrizio Silvestri \textsuperscript{\rm 1} \\
}
\affiliations {
        \textsuperscript{\rm 1} Department of Computer, Control and Management Engineering Antonio Ruberti, Sapienza University of Rome, Italy \\
        \textsuperscript{\rm 2} Department of Computer Science, Sapienza University of Rome, Italy \\
        federico.siciliano@uniroma1.it, mariasofia.bucarelli@uniroma1.it, tolomei@di.uniroma1.it, fsilvestri@diag.uniroma1.it
    }

\begin{document}

\maketitle


\begin{abstract}

In this work, we formulate \newron: a generalization of the McCulloch-Pitts neuron structure. This new framework aims to explore additional desirable properties of artificial neurons. We show that some specializations of \newron\ allow the network to be interpretable with no change in their expressiveness. By just inspecting the models produced by our \newron-based networks, we can understand the rules governing the task. Extensive experiments show that the quality of the generated models is better than traditional interpretable models and in line or better than standard neural networks.

\end{abstract}

\section{Introduction}
\label{sec:intro}

Neural Networks (NNs) have now become the \textit{de facto} standard in most Artificial Intelligence (AI) applications. The world of Machine Learning has moved towards Deep Learning, i.e., a class of NN models that exploit the use of multiple layers in the network to obtain the highest performance.

Research in this field has focused on methods to increase the performance of NNs, in particular on which activation functions \cite{apicella2021survey} or optimization method \cite{sun2019survey} would be best.
Higher performances come at a price: \cite{arrieta2020explainable} show that there is a trade-off between interpretability and accuracy of models. Explainable Artificial Intelligence (XAI) is a rapidly growing research area producing methods to interpret the output of AI models in order to improve their robustness and safety (see e.g. \cite{ghorbani2019interpretation} and \cite{bhatt2019building}). Deep Neural Networks (DNNs) offer the highest performance at the price of the lowest possible interpretability. It is an open challenge to attain such high performance without giving up on model interpretability.

The simplest solution would be to use a less complex model that is natively interpretable, e.g., decision trees or linear models, but those models are usually less effective than NNs. We ask the following question: can we design a novel neural network structure that makes the whole model interpretable without sacrificing effectiveness?

NNs are black-box models: we can only observe their input and output values with no clear understanding of how those two values are correlated according to the model's parameters. Although a single neuron in the NN performs a relatively simple linear combination of the inputs, there is no clear and straightforward link between the parameters estimated during the training and the functioning of the network, mainly because of the stacking of multiple layers and non-linearities.

In this work, we propose a generalization of the standard neuron used in neural networks that can also represent new configurations of the artificial neuron. Thus, we discuss a specific example that allows us to interpret the functioning of the network itself.

We focus our efforts on tabular data since we investigate how \newron\ works only in the case of fully connected NNs. It is more straightforward to produce human-readable rules for this kind of data. We also remark that our goal is not to improve the performance of NNs, but rather to create interpretable versions of NNs that perform as well as other interpretable models (e.g., linear/logistic regression, decision trees, etc.) and similarly to standard NNs, when trained on the same data.

\subsection{Motivating Example}
\label{sec:motivations}

\begin{figure*}[ht]
\centering
\includegraphics[width=0.9\textwidth]{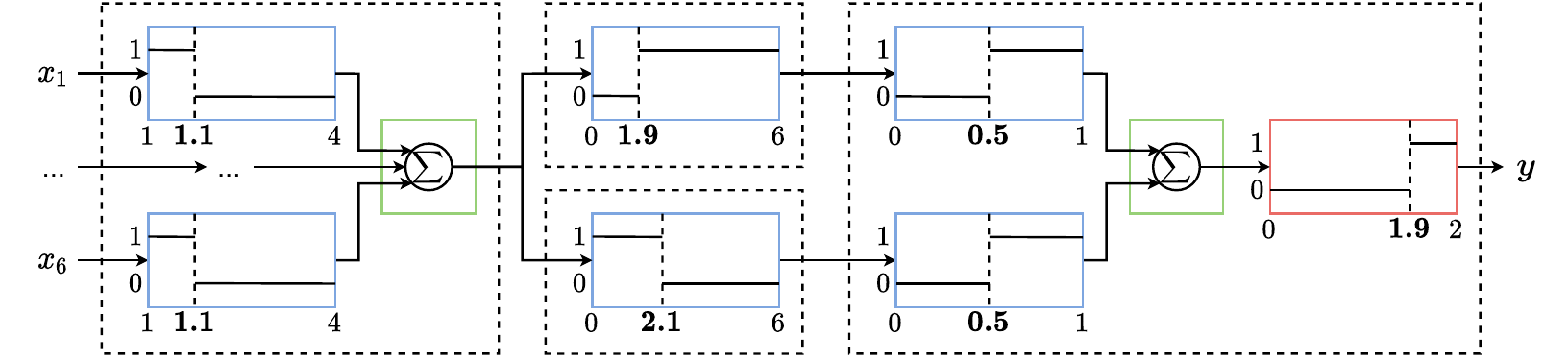}
\caption{An example of a network for the MONK-2 dataset. $x_i$ are the inputs, $y$ is the output. The red and blue rectangles represent the plot of functions, with input range on the $x$-axis and output on the $y$-axis. The green rectangles contain the aggregation function. The numbers in bold represent the thresholds for the step functions.}
\label{img: monk-2 network}
\end{figure*}

Consider a simple dataset: MONK's\footnote{https://archive.ics.uci.edu/ml/datasets/MONK\%27s+Problems}. Each sample consists of $6$ attributes, which take integer values between $1$ and $4$ and a class label determined by a decision rule based on the $6$ attributes. For example, in  MONK-2, the rule that defines the class for each sample is the following: ``\textit{exactly two}'' out of the six attributes are equal to $1$.

It is impossible to intuitively recover rules from the parameter setting from a traditional, fully connected NN.

We shall see in the following that our main idea is that of inverting the activation and aggregation. In \newron\, the nonlinearity directly operates on the input of the neuron. The nonlinearity acts as a thresholding function to the input, making it directly interpretable as a (fuzzy) logical rule by inspecting its parameters. Consider the following network, represented in Figure \ref{img: monk-2 network}: $2$ hidden layers, the first with $1$ neuron, the second with $2$ neurons, and $1$ output neuron. The $x_i$'s are the inputs of the model, $y$ is the output.





We present the form of a typical architecture composed by \newron\ in Figure \ref{img: monk-2 network}. We show how we can interpret the parameters obtained from a trained network. The rectangles represent the plot of a function that divides the input domain into two intervals, separated by the number below the rectangle, taking values $1$ and $0$.

The functions that process the input give output $1$ only if the input is less than $1.1$, given that inputs are integers and assume values only in $\{1,2,3,4\}$, this means ``if $x_i = 1$''. The sum of the output of all these functions, depicted in the green rectangle, then represents the degree of soundness of those rules are.

The second layer has two neurons: the first outputs $1$ if it receives an input greater than $1.9$, i.e. if at least $2$ of the rules $x_i = 1$ are valid, while the second outputs $1$ if it receives an input less than $2.1$, i.e. if $2$ or less of the rules $x_i = 1$ are valid. Notice that the two neurons are activated simultaneously only if $x_i = 1$ is true for exactly two attributes.

In the last layer, functions in the blue rectangles receive values in $\{0,1\}$ and do not operate any transformation, keeping the activation rules unchanged. The sum of the outputs of these functions is then passed to the function in the red rectangle. This function outputs $1$ only if the input is greater than $1.9$. Since the sum is limited in ${0,1,2}$, this happens only when it receives $2$ as input, which occurs only if the two central neurons are activated. As we have seen, this only applies if exactly $2$ of the rules $x_i = 1$ are valid.

So we can conclude that the network gives output $1$ just if ``\textit{exactly two}'' of $\{x_1 = 1, x_2 = 1, x_3 = 1, x_4 = 1, x_5 = 1, x_6 = 1\}$ are true.

\subsection{Contributions}
\label{sec:contributions}
The main contributions of this work are the following:
\begin{itemize}
    \item We propose \newron, a generalization of the McCulloch-Pitt neuron allowing the definition of new artificial neurons. We show how special cases of \newron\ may pave the way towards interpretable, white-box neural networks.
    \item We prove the universal approximation theorem for three specializations of \newron, demonstrating that the new model does not lose any representation power in those cases.
    \item We experiment on several tabular datasets showing that \newron\ allows learning accurate Neural models, beating interpretable by design models such as Decision Trees and Logistic Regression.
\end{itemize}

\section{Related Work}
\label{sec:related}

\cite{Rosenblatt58} introduced the single artificial neuron: the Perceptron. The Perceptron resembles the functioning of the human/biological neuron, where the signal passing through the neuron depends on the intensity of the received signal, the strength of the synapses, and the receiving neuron's threshold. In the same way, the Perceptron makes a linear combination of the inputs received and is only activated if the result exceeds a certain threshold.
Over the years, various improvements to neural networks have been proposed: Recurrent Units, Convolutional Layers, and Graph Neural Networks, but for Fully Connected NNs, research efforts have mainly focused on finding more efficient activation functions \cite{apicella2021survey}.
Two works that have focused on modifying the internal structure of the neuron are those of \cite{kulkarni2009generalized}, and \cite{fan2018new}. In the former, a neuron is introduced that performs both a sum and a product of the inputs in parallel, applies a possibly different activation function for the two results, and then sums the two outcomes. Despite promising results, given the use of fewer parameters, better performance, and reduced training time compared to standard MLPs and RNNs, the proposed neuron, rather than being a generalization, is a kind of union between two standard neurons, one of which uses the product, instead of sum, as aggregation function. 
In the second paper, starting from the notion that the traditional neuron performs a first-order Taylor approximation, the authors propose a neuron using a second-order Taylor approximation. Although this improves the capacity of a single neuron, the authors do not demonstrate any gains in terms of training time or convergence. Indeed, this can be considered a particular case of the higher-order neural units (HONUs) (see, e.g., \cite{gupta2013fundamentals}), i.e., a type of neurons that, by increasing the degree of the polynomial computed within them, try to capture the higher-order correlation between the input patterns.
Recent works that focus on interpretation at neuron level (\cite{dalvi2019one}, \cite{dalvi2019neurox}, \cite{heo2019knowledge}, \cite{nam2020relative}) often concentrate on extracting the most relevant neurons for a given task, but mostly deal with Recurrent or Convolutional neural networks.
Although not designing an alternative version of the neuron, \cite{yang2018deep} proposes an alternative neural network structure, based on a Binning Layer, which divides the single input features into several bins, and a Kronecker Product Layer, which takes into account all the possible combinations between bins. The parameters estimated during training can be interpreted to translate the network into a decision tree through a clever design of the equations defining the network. Although interpretable, the main issue in this work is its scalability. The Kronecker Product Layer has an exponential complexity that makes training time unfeasible when the number of features grows.

\section{The \newron\ Structure}
\label{sec:newron}

A neuron, in the classical and more general case, is represented by the equation $y = f\left(b + \sum_{i=1}^n w_i x_i\right)$.

\begin{figure}[h]
\centering
\includegraphics[width=0.9\columnwidth]{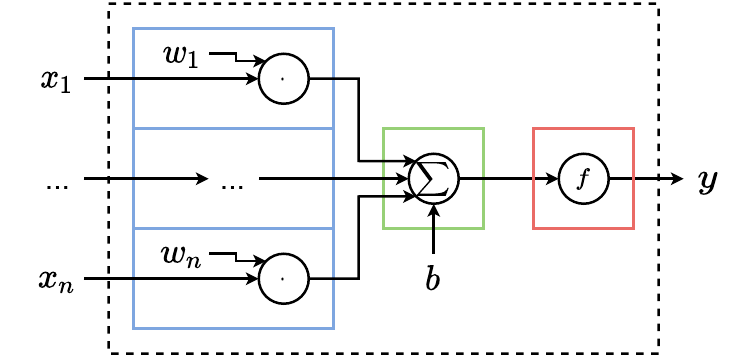}
\caption{Structure of the standard artificial neuron. $w_i$ and $b$ are respectively weights and bias. $f$ is the activation function. $x_i$'s are the inputs and $y$ is the output.}
\label{fig:standard_neuron}
\end{figure}

$b$ is called the bias, $w_i$ are the weights, and $x_i$s are the inputs. $f$ represents the activation function of the neuron. Usually, we use the sigmoid, hyperbolic tangent, or ReLU functions.

We first generalize the above equation, introducing \newron\ as follows:

\begin{equation}
\label{eq:gen-neuron}
    y = f\left(G_{i=1}^n\left(h_i(x_i)\right)\right)
\end{equation}

\begin{figure}[h]
\centering
\includegraphics[width=0.9\columnwidth]{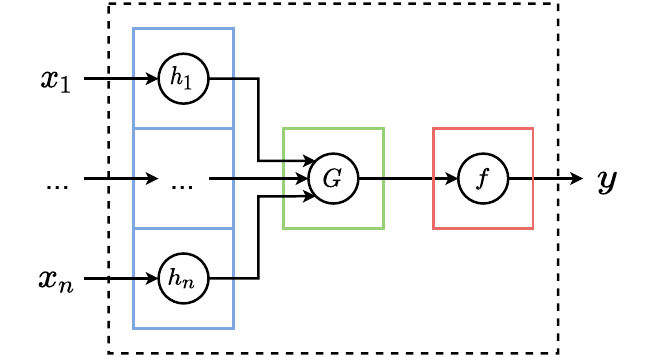}
\caption{Structure of \newron, the generalized artificial neuron. The blue rectangles represent the processing function sections, the green rectangles contain the aggregation function, and the red rectangles represent the activation part. Same colors are also used in Figure \ref{fig:standard_neuron}}
\label{fig:newron}
\end{figure}

Each input is first passed through a function $h_i$, which we will call \textit{processing function}, where the dependence on $i$ indicates different parameters for each input. $G$, instead, represents a generic aggregation function.

Using \newron\ notation, the standard artificial neuron would consist of the following: $h_i(x_i) = w_ix_i$, $G = \sum_{i=1}^n$, and $f(z) = f^*(z+b)$.


$G$ does not have any parameters, while $b$ parametrizes the activation function.

\subsection{Inverted Artificial Neuron (\ian)}

We present $3$ novel structures characterized by an inversion of the aggregation and activation functions. We name this architectural pattern: Inverted Artificial Neuron (\ian). In all the cases we consider the sum as the aggregation function and do not use any activation function: $G = \sum$, and  $f(z) = z$.

\subsubsection{Heaviside \ian}

The first case we consider uses a unit step function as activation. This function, also called the Heaviside function, is expressed by the following equation:

\begin{equation}
    H(x) = 
    \begin{cases}
        1 & x\geq 0 \\
        0 & x<0
    \end{cases}
\end{equation}

According to (\ref{eq:gen-neuron}) we can define the processing function as follows:

\begin{equation}
    h(x_i) = H(w_i(x_i-b_i)) =  
    \begin{cases}
        H(w_i) & x_i\geq b_i \\
        1-H(w_i) & x_i<b_i
    \end{cases}
\end{equation}

where $w_i$ and $b_i$ are trainable parameters.

\subsubsection{Sigmoid \ian}

We cannot train the Heaviside function using gradient descent, and it represents a decision rule that in some cases is too restrictive and not ``fuzzy'' enough to deal with constraints that are not clear-cut.

A natural evolution of the unit step function is therefore the sigmoid function $\sigma(x) = \frac{1}{1+e^{-x}}$. This function ranges in the interval $(0,1)$, is constrained by a pair of horizontal asymptotes, is monotonic and has exactly one inflection point.

The sigmoid function can be used as a processing function with the following parameters: $h(x_i) = \sigma(w_i(x_i-b_i))$.

\subsubsection{Product of $\tanh$ \ian}

Another option we consider as a processing function is the multiplication of hyperbolic tangent ($\tanh$). For simplicity, we will use the term ``$\tanh$-prod''.

The $\tanh$ function $\tanh(x) = \frac{e^{2x}-1}{e^{2x}+1}$ is on its own very similar to the sigmoid. An interesting architecture is that using $M$ $\tanh$ simultaneously. Each $\tanh$ applies its own weights, on each individual input.

While the sigmoid is monotonic with only one inflection point, roughly dividing the input space into two sections, the multiplication of $\tanh$, by being not monotonic, allows us to divide the input space into several intervals. The multiplication would remain in $(-1,1)$, but can be easily rescaled to $(0,1)$.

We can therefore write the processing function in the case of the $\tanh$ multiplication as follows:

\begin{equation}
\label{rescaled_prod_tanh}
    h(x_i) = \frac{\left(\prod_{m=1}^M\tanh(w_{im}(x_i-b_{im}))\right)+1}{2}
\end{equation}

Note how, in this case, the weights depend on both the input $i$ and the $m$-th function. Such a neuron will therefore have $M$ times more parameters than the Heaviside and sigmoid cases.

\subsubsection{Output layer}

The output layer would produce values ranging in the interval $(0,N)$ ($\{0,1,...,N\}$ for the Heaviside case), where $N$ represents the number of neurons in the penultimate layer. This is because the last neuron makes the sum of $N$ processing functions restricted in the interval $(0,1)$ ($\{0,1\}$ for the Heaviside case). To allow the last layer to have a wider output range and thus make our network able to reproduce a wider range of functions, we modify the last layer processing function $h^*$ as follows: $h^*(x_i) = \alpha_i h(x_i)$,



where $\alpha_i$ are trainable parameters.

In the same way, as for a traditional neural network, it is important, in the output layer, to choose an adequate activation function. We need, indeed, to match the range of the output of the network and the range of the target variable. In particular, in the case of output in $(0,1)$, we use a sigmoid centered in $b^*$: $f^*(z) = \sigma(z-b^*)$

In the case of a classification problem with more than $2$ classes, a softmax function ($s(z_j) = \frac{e^{z_j}}{\sum_l e^{z_l}}$) is used to output probabilities.

\subsubsection{Note(s)}

The writing $w(x-b)$ is theoretically identical to that $w^*x+b^*$, where simply $w^*=w$ and $b^* = -bw$. This notation allows us to interpret the weights directly. From $b$, we already know the inflection point of the sigmoid; while looking at $w$, we immediately understand its direction.

\section{Interpretability}
\label{sec:interpretability}

\cite{arrieta2020explainable} presented a well-structured overview of concepts and definitions in the context of Explainable Artificial Intelligence (XAI). 

They make a distinction among the various terms that are mistakenly used as synonyms for interpretability. According to them:

\begin{itemize}
    \item \textbf{Interpretability:} is seen as a passive feature of the model and represents the ability of a human to understand the underlying functioning of a decision model, focusing more on the cause-effect relationship between input and output.
    
    \item \textbf{Transparency:} very similar to interpretability, as it represents the ability of a model to have a certain degree of interpretability. There are three categories of transparency, representing the domains in which a model is interpretable. Simulatable models can be emulated even by a human. Decomposable models must be explainable in their individual parts. For algorithmically transparent models, the user can understand the entire process followed by an algorithm to generate the model parameters and how the model produces an output from the input.
    
    \item \textbf{Explainability:} can be seen as an active feature of a model, encompassing all actions that can detail the inner workings of a model. The explanation represents a kind of interface between a human and the model and must at the same time represent well the functioning of the model and be understandable by humans.
\end{itemize}

In this paper, we show decomposable models that, in some cases, are also algorithmically transparent.

\subsection{Heaviside}
The interpretability of an architecture composed of Heaviside \ian s has to be analyzed by discussing its four main sections separately.

\subsubsection{First layer - Processing function}

A single processing function $h(x) = H(w(x-b))$ divides the space of each variable $x$ in two half-lines starting from $b$, one of which has a value of $1$ and one of which has a value of $0$, depending on the sign of $w$.

\subsubsection{Aggregation}

Using sum as the aggregation function, the output takes values in $\{0,1,...,n\}$; where $0$ corresponds to a deactivation for each input, and $n$ represents an activation for all inputs, and the intermediate integer values $\{1,2,...k,...,n-1\}$ represent activation for $k$ of inputs.

\begin{equation}
    y = \sum_{i=1}^{n} h_i =
    \begin{cases}
        n & h_i=1 \text{ } \forall i \in \{1,...,n\}\\
        k & h_i=1 \textbf{ } i \in S \subseteq \{1,...,n\},  |S| = k\\
        0 & h_i=0 \text{ } \forall i \in \{1,...,n\}\\
    \end{cases}
\end{equation}
where we simplified the notation using $h_i = h\left(x_i\right)$.

\subsubsection{2+ Layer - Processing function}
Let us define an $M$-of-$N$ rule as true if at least $M$ of the $N$ rules of a given set are true.

The Heavisides of the layers after the first one receive values in $\{0,1,...,n\}$, where $n$ represents the number of inputs of the previous layer.
In the case where $0 \leq b \leq n$ and $w > 0$, the Heaviside will output $1$ only if the input received is greater than or equal to $b$, therefore only if at least $\lceil b \rceil$ of the rules $R_i$ of the previous layer are true, which corresponds to a rule of the type $\lceil b\rceil-of-\{R_1,R_2,...,R_n\}$. 
In the opposite case, where $0 \leq b \leq n$ and $w < 0$, Heaviside will output $1$ only if the input received is less than or equal to $b$, so only if no more than $\lfloor b \rfloor$ of the rules of the previous layer are true. This too can be translated to an $M$-of-$N$ rule, inverting all rules $R_j$ and setting $M$ as $\lceil n-b_i \rceil$: $\lceil n-b_i \rceil-of-\{\neg R_1, \neg R_2,..., \neg R_n\}$.

\subsubsection{Last layer - Aggregation}

In the last layer we have to account for the $\alpha$ factors used to weigh the contribution of each input:

\begin{equation}
    y = \sum_{i=1}^{n} \alpha_i h_i(x_i) = \sum_{i=1}^{n} \alpha_i H(w_i (x_i-b_i))
\end{equation}

We have an activation rule for each of the $n$ Heavisides forcing us to calculate all the $2^n$ possible cases. The contribution of each input is exactly $\alpha_i$. So, the output corresponds to the sum of the $\alpha_i$'s for each subset of inputs considered.

\subsection{Sigmoid}

In the case of sigmoid \ian, $b_i$ represents the inflection point of the function, while the sign of $w_i$ tells us in which direction the sigmoid is oriented; if positive, it is monotonically increasing from $0$ to $1$, while if negative, it is monotonically decreasing from $1$ to $0$. The value of $w_i$ indicates how fast it transitions from $0$ to $1$, and if it tends to infinity, the sigmoid tends to the unit step function.

\subsubsection{Sigmoid Interpretation}

The sigmoid can be interpreted as a fuzzy rule of the type $x_i > b_i$ if $w_i>0$ or $x_i < b_i$ if $w_i<0$, where the absolute value of $w_i$ indicates how sharp the rule is. The case $w_i=0$ will always give value $0.5$, so that the input does not have any influence on the output.

If $w_i$ is very large, the sigmoid tends to the unit step function. If, on the other hand, $w_i$ takes values for which the sigmoid in the domain of $x_i$ resembles a linear function, what we can say is that there is a direct linear relationship (or inverse if $w_i<0$) with the input.

The fuzzy rule can be approximated by its stricter version $x_i > b_i$, interpreting fall under the methodology seen for Heaviside. However, this would result in an approximation of the operation of the network.

It is more challenging to devise clear decision rules when we add more layers. Imagine, as an example, a second layer with this processing function:

\begin{equation}
    h(y) = \sigma(w^*(y-b^*))
\end{equation}

where $y$ is the aggregation performed in the previous layer of the outputs of its processing functions, its value roughly indicates how many of the inputs are active. In the second layer, consider as an example a value of $w^*>0$. To have an activation, this means that we might need $k$ inputs greater than or equal to $b^*/k$. Although this does not deterministically indicate how many inputs we need to be true, we know how the output changes when one of the inputs changes.

The last case to consider takes into account the maximum and minimum values that the sigmoid assumes in the domain of $x$. If they are close to each other, that happens when $w$ is very small, the function is close to a constant bearing no connection with the input.

\subsection{Product of $\tanh$}

The multiplication of $\tanh$ has more expressive power, being able to represent both what is represented with the sigmoid, as well as intervals and quadratic relations.

\subsubsection{$\tanh$-prod Interpretation}

In this case, it is not possible to devise as quickly as in the previous case decision rules. Indeed, it is still possible to observe the trend of the function and draw some conclusions.
When the product of the two $\tanh$ resembles a sigmoid, we can follow the interpretation of the sigmoid case. In other cases, areas with quadratic relations can occur, i.e., bells whose peak indicates a more robust activation or deactivation for specific values.

\subsection{Summary of Interpretation}

The advantage of this method lies in the fact that it is possible to analyze each input separately in each neuron, thus easily graph each processing function. Then, based on the shape taken by the processing function, we can understand how the input affects the output of a neuron.

The Heaviside is the most interpretable of our models, allowing a direct generation of decision rules.

Sigmoid and $\tanh$-prod cases depend on the parameter $w$. When it is close to $0$, the activation is constant regardless of the input. When $w$ is large enough, the processing function is approximately a piecewise constant function taking only values $0$ and $1$.

In all the other cases, the processing function approximates a linear or bell-shaped function. Even if we can not derive exact decision rules directly from the model, in these cases, we can infer a linear or quadratic relation between input and output.

Each layer aggregates the interpretations of the previous layers. For example, the processing function of a second layer neuron gives a precise activation when its input is greater than a certain threshold, i.e., the bias $b$ of the processing function. The output of the neuron of the first layer must exceed this threshold, and this happens if its processing functions give in output values whose sum exceeds this threshold.

A separate case is the last layer, where the $\alpha$ parameters weigh each of the interpretations generated up to the last layer.

We can interpret a traditional individual neuron as a linear regressor. However, when we add more layers, they cannot be interpreted. Our structure, instead, remains interpretable even as the number of layers increases.

\section{Universality}
\label{sec:universality}

A fundamental property of neural networks is that of universal approximation. Under certain conditions, multilayer feed-forward neural networks can approximate any function in a given function space. In \cite{cybenko} it is proved that a neural network with a hidden layer and using a continuous sigmoidal activation function is dense in $C(I_n)$, i.e., the space of continuous functions in the unit hypercube in $\mathbb{R}^n$.
\cite{hornik89} generalized to the larger class of all sigmoidal functions.

To make the statement of theorems clearer we recall that the structure of a two-layer network with \ian\ neurons and a generic processing function $h$ is
\begin{equation}
    \label{2layer_generica}
    \psi(x) = \sum_{j=1}^N \alpha_j h(w_j(\sum_{i=1}^n h(w_{ij} (x_i - b_{ij}))-b_j))
\end{equation}
where $w_j, w_{ij}, \alpha_j, b_j, b_{ij} \in \mathbb{R}$.


When the processing function is the Heaviside function we proved that the network can approximate any continuous function on $I_n$, Lebesgue measurable functions on $I_n$ and functions in $L^p(\mathbb{R}^n,\mu)$ for $1 \leq p < \infty$, with $\mu$ being a Radon measure. More precisely, the following theorems hold;
we detail the proofs of the theorems in the appendix.

\begin{theorem}
When the processing function is the Heaviside function the finite sums of the form (\ref{2layer_generica}) are dense in $L^p(\mathbb{R}^n, \mu)$ for $1 \leq p < \infty$, with $\mu$ being a Radon measure on $(\mathbb{R}^n,\mathcal{B}( \mathbb{R}^n))$ ($\mathcal{B}$ denote the Borel $\sigma$–algebra). 
\end{theorem}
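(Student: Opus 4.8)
The plan is to reduce the statement to a well-known density fact: step functions --- finite linear combinations of indicators of half-open boxes $\prod_{i=1}^n [a_i, c_i)$ --- are dense in $L^p(\mathbb{R}^n, \mu)$ for every $1 \le p < \infty$ and every Radon measure $\mu$. Granting this, it suffices to show that every such step function lies exactly in the class of finite sums of the form (\ref{2layer_generica}). Since (\ref{2layer_generica}) is itself a finite $\alpha_j$-weighted sum of hidden units, and since this class is visibly closed under finite linear combinations (one simply concatenates the hidden units of the summands), the whole reduction collapses to a single claim: one box indicator is representable by finitely many Heaviside \ian\ hidden units.

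First I would observe what a single hidden unit computes. With the Heaviside processing function, each inner term $h(w_{ij}(x_i - b_{ij}))$ with $w_{ij}>0$ equals the coordinate indicator $\mathbf{1}[x_i \ge b_{ij}]$, so the inner aggregation $\sum_{i=1}^n h(w_{ij}(x_i-b_{ij}))$ merely counts how many of the $n$ thresholds are met. Choosing $w_j>0$ and $b_j = n - \tfrac12$ in the outer Heaviside turns the unit into $\mathbf{1}[\text{all } n \text{ thresholds met}] = \prod_{i=1}^n \mathbf{1}[x_i \ge t_i]$, i.e.\ the indicator of an upper orthant $\{x : x_i \ge t_i\ \forall i\}$. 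Thus a single unit represents exactly an orthant but --- because the inner sum admits only one threshold per coordinate --- cannot by itself represent a bounded box.

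The box is recovered by inclusion--exclusion, and this is precisely the role of the outer weights $\alpha_j$. Writing each factor as $\mathbf{1}[a_i \le x_i < c_i] = \mathbf{1}[x_i \ge a_i] - \mathbf{1}[x_i \ge c_i]$ and expanding the product gives
\[
\mathbf{1}\Big[x \in \textstyle\prod_i [a_i,c_i)\Big] = \sum_{T \subseteq \{1,\dots,n\}} (-1)^{\,n-|T|} \prod_{i\in T}\mathbf{1}[x_i \ge a_i]\prod_{i\notin T}\mathbf{1}[x_i \ge c_i].
\]
Each of the $2^n$ summands is $\pm 1$ times an orthant indicator, hence is produced by a single hidden unit as above; taking $N = 2^n$ and $\alpha_j = (-1)^{\,n-|T_j|}$ realizes the box indicator exactly. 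Stacking the units of finitely many boxes then realizes an arbitrary step function, completing the reduction.

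The main obstacle is the measure-theoretic density step rather than the neural construction. I would establish it by the usual chain: simple functions are dense in $L^p(\mu)$, each measurable set of finite $\mu$-mass is approximated in measure by a finite union of boxes using the outer (and inner) regularity of the Radon measure $\mu$ together with the fact that open sets in $\mathbb{R}^n$ are countable unions of boxes, and $\|\cdot\|_{L^p}$-approximation of indicators then follows because $p<\infty$. A technical subtlety to handle with care is the Heaviside boundary convention: the hyperplanes $\{x_i = b_{ij}\}$ may carry positive $\mu$-mass, but since at most countably many parallel hyperplanes can have positive mass, the thresholds $a_i, c_i$ may be perturbed onto $\mu$-null hyperplanes without changing the $L^p$ target, so the exact orthant and box representations remain valid up to a set of arbitrarily small measure.
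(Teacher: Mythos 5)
Your proposal is correct and follows essentially the same route as the paper's proof: reduce to the density of step functions (finite linear combinations of half-open box indicators) in $L^p(\mathbb{R}^n,\mu)$, then realize each box indicator exactly as a $2^n$-term inclusion--exclusion of upper-orthant indicators, each orthant indicator being produced by a single Heaviside hidden unit with the outer threshold set just below $n$. The only differences are cosmetic: the paper cites the step-function density as a known proposition rather than sketching its proof, and your closing worry about hyperplanes of positive $\mu$-mass is unnecessary, since the convention $H(0)=1$ makes the representation of the half-open boxes $\prod_i [a_i,c_i)$ exact pointwise, so no perturbation of thresholds is needed.
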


\begin{theorem}
When the processing function is the Heaviside function the finite sum of the form (\ref{2layer_generica}) are $m$-dense in $M^n$. Where $M^n$ is the set of Lebesgue measurable functions on the $n$-dimensional hypercube $I_n$.
\end{theorem}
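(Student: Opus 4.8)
The plan is to obtain $m$-density in $M^n$ as a consequence of uniform density in the space $C(I_n)$ of continuous functions on the compact hypercube, using the measure-theoretic bridge of \cite{hornik89}. First I would make ``$m$-dense'' precise by fixing the metric it refers to: for Lebesgue measure $\mu$ on $I_n$, which is finite since $\mu(I_n)=1$, set $\rho_\mu(f,g)=\inf\{\epsilon>0:\mu(\{x:|f(x)-g(x)|>\epsilon\})<\epsilon\}$. This $\rho_\mu$ metrizes convergence in measure, and $m$-density of a family $S$ means density of $S$ in $(M^n,\rho_\mu)$.

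The skeleton of the argument is the triangle inequality for $\rho_\mu$ combined with two facts. First, uniform approximation dominates approximation in measure: if $\sup_{x\in I_n}|f(x)-g(x)|<\epsilon$ then $\mu(\{|f-g|>\epsilon\})=0<\epsilon$, so uniform density of the networks in $C(I_n)$ already gives $\rho_\mu$-density in $C(I_n)$. Second, $C(I_n)$ is itself $\rho_\mu$-dense in $M^n$, which is precisely Lusin's theorem: every measurable $f$ coincides with a continuous function off a set of arbitrarily small measure, and $\mu$ is a finite regular measure on the compact $I_n$, so the theorem applies. Chaining the two through the triangle inequality finishes the proof: given $f\in M^n$ and $\epsilon>0$, choose $g\in C(I_n)$ with $\rho_\mu(f,g)<\epsilon/2$, then a network $\psi$ of the form (\ref{2layer_generica}) with $\sup_{I_n}|g-\psi|<\epsilon/2$, whence $\rho_\mu(f,\psi)<\epsilon$.

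The substantive ingredient is the uniform density of Heaviside \ian\ networks in $C(I_n)$. Here I would use the two-layer counting-and-thresholding structure directly: each inner term $H(w_{ij}(x_i-b_{ij}))$ is the indicator of a coordinate half-line, the inner sum counts how many such conditions hold, and the outer Heaviside with threshold $b_j=n-\tfrac12$ and $w_j>0$ turns this count into a conjunction, so a single neuron realizes the indicator of an axis-aligned orthant $\{x:x_i\ge a_i\ \forall i\}$. Writing a half-open box indicator as $\prod_{i=1}^n\bigl(H(x_i-a_i)-H(x_i-c_i)\bigr)$ and expanding by inclusion--exclusion produces a signed sum of $2^n$ orthant indicators, which the output layer assembles through weights $\alpha_j\in\{+1,-1\}$. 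Since a continuous function on the compact $I_n$ is uniformly continuous, it is uniformly approximable by a step function that is constant on a sufficiently fine partition into half-open boxes, and any such step function is a finite linear combination of box indicators, hence exactly of the form (\ref{2layer_generica}).

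I expect the main obstacle to lie in this box-indicator construction rather than in the measure-theoretic lifting. One has to verify that the outer Heaviside implements the conjunction exactly for the chosen threshold, that the inclusion--exclusion reassembles the box without double counting, and above all that the boundary hyperplanes---where the Heavisides switch value and the signed sum may misfire---form a Lebesgue-null set, so that they corrupt neither the uniform estimate (once the partition is aligned and half-open boxes are used) nor the convergence in measure. Once uniform density in $C(I_n)$ is established---and it may already be supplied by the continuous-approximation theorem proved for the Heaviside case---the passage to $M^n$ via $\rho_\mu$ and Lusin's theorem is routine.
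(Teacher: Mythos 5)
Your proof is correct, but it reaches $M^n$ by a genuinely different route than the paper. The paper never passes through $C(I_n)$: it invokes a pointwise result (Proposition \ref{th6}, cited from \cite{stein}) stating that every measurable function is the a.e.\ pointwise limit of step functions, notes that by Lemma \ref{lemma:Heaviside_stepfunction} the Heaviside networks represent step functions \emph{exactly} (not merely approximately), and concludes because on the finite measure space $I_n$ pointwise a.e.\ convergence implies convergence in measure, i.e.\ $d_m$-convergence. You instead use the Hornik--Stinchcombe--White bridge: Lusin's theorem (plus Tietze extension) makes $C(I_n)$ dense in $(M^n,\rho_\mu)$, uniform density of the networks in $C(I_n)$ (the paper's Theorem \ref{teo:Heaviside_continuous}) dominates $\rho_\mu$-approximation, and the triangle inequality chains the two. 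Both arguments share the same combinatorial core---the orthant-indicator neuron and the inclusion--exclusion assembly of half-open box indicators---and your product expansion $\prod_{i=1}^n\bigl(H(x_i-a_i)-H(x_i-c_i)\bigr)$ is exactly the paper's signed-orthant construction; it holds as a pointwise identity everywhere, so the boundary ``misfiring'' you worry about cannot occur, and the only genuine boundary care needed is covering the faces $x_i=1$ of the closed cube with half-open cells (the paper handles this with degenerate $(n-1)$-dimensional rectangles, cf.\ Remark \ref{remark:rectangles-1}); for the in-measure statement alone even this is dispensable, since those faces are Lebesgue-null. As for the trade-off: the paper's route keeps the measurable-function theorem independent of the continuous-function theorem and avoids Lusin altogether, at the price of citing the a.e.\ step-approximation theorem; your route is more modular and textbook-standard, but it reverses the paper's logical order, making Theorem \ref{teo:HeavisideMeas} depend on Theorem \ref{teo:Heaviside_continuous} (no circularity arises, since that theorem rests only on the step-function lemma and uniform continuity).
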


\begin{theorem}
Given $g \in C(I_n) $ and given $\epsilon>0 $ there is a sum $\psi(x)$ of the form (\ref{2layer_generica}) with Heaviside as processing function such that $$|\psi(x)-g(x)| < \epsilon \quad \forall x \in I_n.$$
\end{theorem}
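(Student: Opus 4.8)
The plan is to approximate $g$ uniformly by a piecewise-constant step function supported on a fine grid of boxes, and then to realize each box indicator \emph{exactly} as a finite sum of the form (\ref{2layer_generica}). First I would invoke uniform continuity of $g$ on the compact set $I_n$: given $\epsilon>0$ there is $\delta>0$ with $|g(x)-g(x')|<\epsilon$ whenever $\|x-x'\|_\infty<\delta$. Partition $I_n$ into half-open cubes $B_k=\prod_{i=1}^n [a_i^{(k)},c_i^{(k)})$ of side less than $\delta$ (taking the cubes abutting the upper faces to be closed, so that every point of $I_n$ lies in exactly one cube). Picking a sample point $\xi_k\in B_k$, the step function $s(x)=\sum_k g(\xi_k)\,\mathbf{1}_{B_k}(x)$ then satisfies $|s(x)-g(x)|<\epsilon$ for \emph{all} $x\in I_n$, because each $x$ belongs to a unique $B_k$ with $\|x-\xi_k\|_\infty<\delta$. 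The half-open tiling is precisely what upgrades this from an almost-everywhere bound to the pointwise bound the uniform statement demands.

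The crux is to write each cube indicator $\mathbf{1}_{B_k}$ as a finite sum of the form (\ref{2layer_generica}), and here I would exploit the fact that the inner aggregation in (\ref{2layer_generica}) sums exactly one Heaviside per coordinate. A single second-layer neuron can realize the indicator of an ``orthant box'' $\{x : x_i\geq t_i\ \forall i\}$: put $w_{ij}>0$ and $b_{ij}=t_i$, so that the inner sum $\sum_{i=1}^n H(w_{ij}(x_i-t_i))$ counts how many of the $n$ one-sided conditions hold, and choose the outer parameters $w_j>0$, $b_j\in(n-1,n)$ so that the outer Heaviside $H(w_j(\cdot-b_j))$ fires exactly when the count equals $n$, i.e. when all $n$ conditions hold. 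To cut these corner regions down to a bounded box I would use inclusion--exclusion coordinatewise: from $\mathbf{1}_{[a_i,c_i)}(x_i)=H(x_i-a_i)-H(x_i-c_i)$, expanding $\mathbf{1}_{B_k}(x)=\prod_{i=1}^n\big(H(x_i-a_i^{(k)})-H(x_i-c_i^{(k)})\big)$ produces $2^n$ terms, each a product of $n$ one-sided ``$\geq$'' conditions (one orthant box, with sign $(-1)^{n-|S|}$). Every such term is exactly one neuron of the kind above, so $\mathbf{1}_{B_k}$ is a sum of $2^n$ neurons, and the coefficients $\alpha_j$ absorb both the signs and the value $g(\xi_k)$.

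Assembling over all cubes yields a single finite sum $\psi$ of the form (\ref{2layer_generica}) that equals $s$ on $I_n$, whence $|\psi(x)-g(x)|=|s(x)-g(x)|<\epsilon$ for every $x\in I_n$. The step I expect to be the main obstacle is this box representation under the structural constraint that each neuron aggregates only one Heaviside per input coordinate: a naive conjunction of the $2n$ bounds $a_i\leq x_i\leq c_i$ cannot be counted inside a single neuron, and it is the inclusion--exclusion decomposition into $2^n$ one-sided orthant boxes --- each using exactly $n$ conditions, one per coordinate --- that reconciles the desired product of intervals with the $n$-term inner aggregation of (\ref{2layer_generica}). The remaining work is purely boundary bookkeeping: aligning the half-open/closed conventions with the Heaviside tie-breaking ($H(0)=1$) so that the identity $\psi=s$ holds at every point of $I_n$, including its faces.
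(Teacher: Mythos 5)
Your proposal is correct and follows essentially the same route as the paper's proof: uniform continuity of $g$ on the compact cube, a partition into half-open boxes of side less than $\delta$, and the realization of each box indicator as an inclusion--exclusion combination of $2^n$ positive-orthant indicators, each of which is exactly one second-layer neuron (inner sum counting the $n$ one-sided conditions, outer Heaviside with threshold in $(n-1,n]$). The only cosmetic difference is at the upper boundary, where you close the top cubes (equivalently, push the upper cutoff thresholds beyond $1$ so the corresponding Heavisides vanish on $I_n$) while the paper instead adjoins $(n-1)$-dimensional rectangles on the faces $x_i=1$; both resolve the same tie-breaking issue with $H(0)=1$.
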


When the processing function is the sigmoid function or $\tanh$-prod, we proved that the finite sums of the form (\ref{2layer_generica}) are dense in the space of continuous functions defined on the unit $n$-dimensional hypercube.

\begin{theorem}
When the processing function is a continuous sigmoidal function the finite sums of the form (\ref{2layer_generica})  are dense in $C(I_n)$.
\end{theorem}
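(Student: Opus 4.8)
The plan is to argue by contradiction using the Hahn--Banach and Riesz representation theorems, reducing the nested two-layer structure to the one-dimensional discriminatory property of continuous sigmoidal functions established by \cite{cybenko}. Let $S \subseteq C(I_n)$ be the closed linear subspace generated by the finite sums of the form (\ref{2layer_generica}). If $S \ne C(I_n)$, there is a nonzero bounded linear functional vanishing on $S$, which by the Riesz representation theorem is integration against a nonzero finite signed regular Borel measure $\mu$ on $I_n$. Since every single-neuron function $h\!\left(w\big(\sum_{i=1}^n h(w_i(x_i-b_i))-b\big)\right)$ belongs to $S$, we obtain $\int_{I_n} h\!\left(w(\phi(x)-b)\right)\,d\mu(x)=0$ for all $w,b\in\mathbb{R}$ and every choice of inner parameters, where $\phi(x)=\sum_{i=1}^n h(w_i(x_i-b_i))$ is a fixed continuous function.

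First I would peel off the outer nonlinearity. Fixing the inner parameters and pushing $\mu$ forward under $\phi$ yields a finite signed measure $\nu=\phi_*\mu$ on a compact interval with $\int h(w(s-b))\,d\nu(s)=0$ for all $w,b$. The one-dimensional case of Cybenko's lemma --- a continuous sigmoidal function is discriminatory --- then forces $\nu=0$, i.e. $\int_{I_n} g(\phi(x))\,d\mu(x)=0$ for every bounded continuous $g$ and every $\phi$ of this form.

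Next I would exploit the freedom in the inner parameters. Taking the inner weights $w_i=t\to\infty$ with the biases $b_i$ fixed drives each inner unit to a step, $h(t(x_i-b_i))\to H(x_i-b_i)$ pointwise off the hyperplanes $\{x_i=b_i\}$, so that $\phi(x)\to\sum_i H(x_i-b_i)$, which equals $n$ exactly on the upper box $Q_b=\prod_i(b_i,1]$. Choosing a fixed continuous $g$ with $g(n)=1$ and $g\equiv0$ on $(-\infty,n-\tfrac12]$ and applying bounded convergence, the identities $\int g(\phi)\,d\mu=0$ pass to the limit and give $\mu(Q_b)=0$. Since $|\mu|$ is finite, the hyperplanes carrying mass are at most countable, so this holds for a dense set of $b$ and hence for all $b$. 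The upper boxes form a $\pi$-system generating $\mathcal{B}(I_n)$, so by inclusion--exclusion $\mu$ vanishes on every box and therefore $\mu=0$, contradicting $\mu\ne0$. This proves $S=C(I_n)$.

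The main obstacle is the passage from the outer to the inner layer: the pushforward trick cleanly reduces the outer sigmoid to Cybenko's one-dimensional lemma, but converting the resulting family of conditions ``$\phi_*\mu=0$ for all coordinatewise-sigmoid sums $\phi$'' into ``$\mu=0$'' is the delicate part. It is handled by the large-weight limit that turns the smooth inner units into indicators of upper boxes, and the one genuine technical care is the boundary layers $\{x_i=b_i\}$: these must be arranged to be $|\mu|$-null (possible for a dense set of thresholds since $\mu$ is finite) before bounded convergence can be invoked.
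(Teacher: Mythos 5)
Your proof is correct, and after the shared Hahn--Banach/Riesz contradiction setup it takes a genuinely different route from the paper's. The paper keeps sigmoids in both layers, writing $\sigma_\lambda(w(x-b))=\sigma(\lambda w(x-b)+\phi)$ and applying dominated convergence twice (inner limit, then outer), so that both layers degenerate to a step-like function $\gamma$ equal to $\sigma(\phi)$ on critical boundaries; it then re-runs its Heaviside step-function lemma to build indicator functions of rectangles, tracks the boundary terms through the identity $\sigma(\phi)\mu(\partial_a R)+(1-\sigma(\phi))\mu(\partial_b R)+\mu(R)=0$, removes them by letting $\phi\to\infty$, and concludes via a decomposition of open sets into partly open cubes together with regularity of $\mu$. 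Your pushforward step $\nu=\phi_*\mu$ replaces all of the outer-layer work: the one-dimensional discriminatory property of continuous sigmoidal functions \cite{cybenko} gives $\nu=0$, hence $\int g(\phi)\,d\mu=0$ for \emph{every} bounded $g$ rather than only sigmoidal ones, so a single inner limit and a bump function $g$ at level $n$ yield $\mu(Q_b)=0$ directly; boundaries are handled by the countability of $|\mu|$-charged hyperplanes instead of the $\phi$-shift, and the $\pi$--$\lambda$ uniqueness theorem finishes in place of regularity. Your version is shorter, more modular, and would generalize verbatim to any outer processing function that is discriminatory in one dimension; the paper's is more self-contained and reuses the rectangle machinery it already built for the Heaviside theorems. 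Two details deserve a sentence in a full write-up: your parametrization $h(w(s-b))$ realizes only the pairs $(w,\theta)$ with $w\neq 0$ in Cybenko's family $\sigma(ws+\theta)$, which is harmless because the one-dimensional argument needs only those; and the passage from ``$\mu(Q_b)=0$ for a dense set of $b$'' to ``for all $b$'' should be justified by continuity from below along thresholds decreasing to each $b_i$ (or by noting that the good upper boxes already form a generating $\pi$-system of $\mathcal{B}(I_n)$).
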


\begin{theorem}
Let $\psi(x)$ be the family of networks defined by the equation (\ref{2layer_generica}) when the processing function is given by (\ref{rescaled_prod_tanh}).
This family of functions is dense in $C(I_{n})$.
\end{theorem}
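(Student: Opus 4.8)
The plan is to reduce this statement to the already-established density of sigmoidal networks. The key observation is the algebraic identity
\begin{equation}
\frac{\tanh(t)+1}{2} = \frac{1}{1+e^{-2t}} = \sigma(2t),
\end{equation}
so that when $M=1$ the rescaled product (\ref{rescaled_prod_tanh}) becomes $h(x_i) = \sigma\big(2w_{i1}(x_i-b_{i1})\big)$, i.e. exactly a logistic sigmoid whose factor $2$ is absorbed into the trainable weight. Since $\sigma$ is a continuous sigmoidal function, the $M=1$ subfamily of (\ref{2layer_generica}) coincides with a network of the form (\ref{2layer_generica}) driven by a continuous sigmoidal processing function, which is dense in $C(I_n)$ by the preceding theorem. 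Hence for $M=1$ the claim is immediate.

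For general $M\geq 2$ I would show that every $M=1$ network is a uniform limit on $I_n$ of networks using $M$ factors, so density is inherited by taking closures. Given an $M=1$ network, I keep one $\tanh$ factor active in each processing function and saturate the remaining $M-1$ factors to the constant $1$: because $I_n$ is compact, each coordinate $x_i$ ranges over $[0,1]$ and the inner aggregation $\sum_i h(\cdot)-b_j$ ranges over a fixed bounded interval (the inner processing functions lie in $[0,1]$), so I may pick biases $b_{im}$ below this range and weights $w_{im}>0$ large enough that every saturated factor $\tanh\big(w_{im}(\cdot-b_{im})\big)$ stays uniformly within an arbitrarily small $\delta$ of $1$ on the relevant compact set. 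The product of the active factor with these near-unit factors then approximates the single active $\tanh$ uniformly to order $\delta$.

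The main obstacle is controlling how this per-factor error propagates through the two stacked layers, since the same processing function appears both inside the aggregation and outside it. I would handle this by compactness and Lipschitz estimates: the inner processing functions are bounded in $[0,1]$, so the argument fed to the outer $h$ lives in a compact interval on which $h$ (and hence its saturated approximation) is Lipschitz; a uniform $\delta$-perturbation of each processing function therefore induces a uniform $O(\delta)$ perturbation of $\psi$, with a constant depending only on $N$, the $|\alpha_j|$, and the finite Lipschitz bounds over these compact sets. Choosing $\delta$ small makes this below $\epsilon/2$, and combining with the $\epsilon/2$ approximation supplied by the sigmoidal theorem yields a $\tanh$-prod network within $\epsilon$ of any target $g\in C(I_n)$. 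The delicate bookkeeping is verifying the uniform saturation on the bounded range of the inner aggregation and confirming all Lipschitz constants are finite; everything else is routine.
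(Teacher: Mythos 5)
Your proposal is correct in outline, but it takes a genuinely different route from the paper. The paper does not reduce to the sigmoidal theorem as a black box; it re-runs the Cybenko-style duality argument directly on the $\tanh$-prod family: assume non-density, produce via Hahn--Banach and the Riesz Representation Theorem a nonzero Radon measure $\mu$ annihilating every network, then drive the scaling parameters $\lambda_1,\lambda_2\to\infty$ and use dominated convergence to show $\mu$ must also annihilate pointwise limits of the networks. The key technical remark there is the same saturation idea you use --- choose the parameters so that all but one factor of the product satisfies $w_k(x-b_k)>0$ on $[0,1]$, so in the limit those factors become $1$ and the product becomes a step function --- after which the measure is shown to vanish on rectangles, open sets, and finally everywhere. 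Your route instead observes that $\bigl(\tanh(t)+1\bigr)/2=\sigma(2t)$, so the $M=1$ subfamily \emph{is} the logistic-sigmoid network family and density follows immediately from the preceding theorem; for fixed $M\ge 2$ you show the closure of the $M$-factor family contains every $M=1$ network by uniform saturation. This is shorter, avoids repeating the measure-theoretic machinery, and makes the logical dependence on the sigmoid theorem explicit; the paper's version is more self-contained and handles the limit inside the integral rather than in the uniform norm. One point to tighten in your write-up: when you propagate the per-factor error $\delta$ through the outer layer, do not bound it by the Lipschitz constant of the \emph{saturated} outer product, since the global Lipschitz constant of $\tanh\bigl(w_{jl}(\cdot-b_{jl})\bigr)$ grows like $w_{jl}$, which you are sending to infinity. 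Either split by the triangle inequality so that only the fixed $M=1$ outer factor's Lipschitz constant enters (comparing $F$ to $F_1$ at the same argument, then $F_1$ at perturbed versus unperturbed arguments), or note explicitly that on the compact range $[0,n]$ of the inner aggregation the saturated factors have derivative $w_{jl}\,\mathrm{sech}^2\bigl(w_{jl}(u-b_{jl})\bigr)\to 0$, so their contribution to the local Lipschitz constant is harmless. With that bookkeeping done, your argument is complete.
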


\section{Experiments}
\label{sec:experiments}

\subsection{Datasets}

We selected a collection of datasets from the UCI Machine Learning Repository. We only consider classification models in our experiments. However, it is straightforward to apply \newron architectures to regression problems. The description of the datasets is available at the UCI Machine Learning Repository website or the Kaggle website.

We also used $4$ synthetic datasets of our creation, composed of $1000$ samples with $2$ variables generated as random uniforms between $-1$ and $1$ and an equation dividing the space into $2$ classes. The $4$ equations used are bisector, xor, parabola, and circle.

We give more details about the datasets in the appendix.

\subsection{Methods}

We run a hyperparameter search to optimize the \ian\ neural network structure, i.e., depth and number of neurons per layer, for each dataset. We tested \ian\ with all three different processing functions. In the $\tanh$-prod case, we set $M=2$.

Concerning the training of traditional neural networks, we tested the same structures used for \newron, i.e., the same number of layers and neurons. Finally, we also ran a hyperparameter search to find the best combinations in the case of  Logistic Regression (LR), Decision Trees (DT), and Gradient Boosting Decision Trees (GBDT). We include all the technical details on the methods in the appendix.

\subsection{Results}

\begin{table*}[!ht]
\centering
\begin{tabular}{l||c|c|c||c|c||c|c}
                            & \multicolumn{3}{c||}{\ian\ models}  &     \multicolumn{2}{c||}{Interpretable models}  & \multicolumn{2}{c}{Non-interpretable models} \\
Dataset & Heaviside             & sigmoid               & $\tanh$-prod             &          LR             &            DT           &                GBDT          &            NN         \\ \hline
adult                       & 80.2 ($\pm$0.06)          & \textbf{82.6 ($\pm$0.05)} & 82.3 ($\pm$0.06)          & 76.2 ($\pm$0.07)          & 81.5 ($\pm$0.06)          & \underline{87.5 ($\pm$0.05)}       & 83.1 ($\pm$0.06)        \\
australian                  & 86.5 ($\pm$0.51)          & 87.0 ($\pm$0.5)           & \textbf{88.7 ($\pm$0.4)}  & \textbf{88.7 ($\pm$0.4)}  & 87.0 ($\pm$0.41)          & \underline{90.2 ($\pm$0.47)}       & 88.0 ($\pm$0.4)         \\
b-c-w                       & \textbf{98.9 ($\pm$0.16)} & \textbf{98.9 ($\pm$0.16)} & \textbf{98.9 ($\pm$0.16)} & 97.8 ($\pm$0.23)          & 97.7 ($\pm$0.23)          & 98.3 ($\pm$0.21)             & \underline{98.9 ($\pm$0.17)}  \\
car                         & 95.1 ($\pm$0.2)           & 95.9 ($\pm$0.21)          & \textbf{100.0 ($\pm$0.0)} & 51.4 ($\pm$0.45)          & 98.5 ($\pm$0.11)          & \underline{100.0 ($\pm$0.0)}       & 99.8 ($\pm$0.04)        \\
cleveland                   & \textbf{65.6 ($\pm$1.02)} & 60.1 ($\pm$1.1)           & 62.9 ($\pm$1.13)          & 60.8 ($\pm$1.13)          & 53.6 ($\pm$1.19)          & 61.5 ($\pm$1.01)             & \underline{65.6 ($\pm$1.01)}  \\
crx                         & 86.2 ($\pm$0.51)          & 85.4 ($\pm$0.58)          & 86.5 ($\pm$0.5)           & 84.6 ($\pm$0.45)          & \textbf{88.0 ($\pm$0.42)} & 82.9 ($\pm$0.58)             & 87.7 ($\pm$0.44)        \\
diabetes                    & 73.3 ($\pm$0.56)          & 72.7 ($\pm$0.68)          & \textbf{76.1 ($\pm$0.61)} & 75.6 ($\pm$0.6)           & 74.1 ($\pm$0.63)          & 75.1 ($\pm$0.64)             & 74.2 ($\pm$0.65)        \\
german                      & \textbf{78.2 ($\pm$0.53)} & 77.0 ($\pm$0.53)          & 75.5 ($\pm$0.52)          & 75.1 ($\pm$0.52)          & 68.3 ($\pm$0.57)          & 76.6 ($\pm$0.55)             & 76.7 ($\pm$0.54)        \\
glass                       & 77.0 ($\pm$1.17)          & 81.6 ($\pm$1.04)          & \textbf{85.6 ($\pm$1.02)} & 72.1 ($\pm$1.08)          & 72.7 ($\pm$1.19)          & \underline{87.3 ($\pm$0.9)}        & 82.5 ($\pm$0.91)        \\
haberman                    & 76.9 ($\pm$0.94)          & 76.1 ($\pm$0.92)          & \textbf{77.2 ($\pm$0.88)} & 73.0 ($\pm$1.05)          & 64.4 ($\pm$1.08)          & 72.5 ($\pm$1.09)             & 76.1 ($\pm$0.92)        \\
heart                       & \textbf{88.7 ($\pm$0.67)} & 86.3 ($\pm$0.85)          & 82.7 ($\pm$0.8)           & 82.4 ($\pm$0.95)          & 81.4 ($\pm$1.02)          & 81.7 ($\pm$0.98)             & 82.9 ($\pm$0.95)        \\
hepatitis                   & 84.7 ($\pm$1.26)          & \textbf{85.1 ($\pm$1.23)} & 82.5 ($\pm$1.16)          & 79.1 ($\pm$1.45)          & 79.1 ($\pm$1.33)          & 81.7 ($\pm$1.32)             & 82.4 ($\pm$1.13)        \\
image                       & 93.0 ($\pm$0.11)          & 94.0 ($\pm$0.1)           & \textbf{94.4 ($\pm$0.09)} & 90.4 ($\pm$0.12)          & 90.6 ($\pm$0.12)          & \underline{95.8 ($\pm$0.08)}       & 92.6 ($\pm$0.11)        \\
ionosphere                  & 94.4 ($\pm$0.48)          & \textbf{96.7 ($\pm$0.34)} & 96.5 ($\pm$0.37)          & 92.0 ($\pm$0.51)          & 94.5 ($\pm$0.45)          & 95.4 ($\pm$0.37)             & \underline{96.7 ($\pm$0.34)}  \\
iris                        & \textbf{100.0 ($\pm$0.0)} & \textbf{100.0 ($\pm$0.0)} & \textbf{100.0 ($\pm$0.0)} & \textbf{100.0 ($\pm$0.0)} & 97.3 ($\pm$0.52)          & 97.3 ($\pm$0.52)             & \underline{100.0 ($\pm$0.0)}  \\
monks-1                     & 94.4 ($\pm$0.21)          & \textbf{100.0 ($\pm$0.0)} & \textbf{100.0 ($\pm$0.0)} & 66.0 ($\pm$0.46)          & 90.6 ($\pm$0.27)          & \underline{100.0 ($\pm$0.0)}       & \underline{100.0 ($\pm$0.0)}  \\
monks-2                     & \textbf{100.0 ($\pm$0.0)} & \textbf{100.0 ($\pm$0.0)} & \textbf{100.0 ($\pm$0.0)} & 54.5 ($\pm$0.45)          & 82.7 ($\pm$0.33)          & 94.2 ($\pm$0.21)             & 87.6 ($\pm$0.27)        \\
monks-3                     & 97.1 ($\pm$0.15)          & 97.1 ($\pm$0.15)          & 97.1 ($\pm$0.15)          & 81.2 ($\pm$0.31)          & \textbf{97.2 ($\pm$0.16)} & 96.2 ($\pm$0.16)             & 90.3 ($\pm$0.25)        \\
sonar                       & 93.3 ($\pm$0.74)          & \textbf{96.8 ($\pm$0.48)} & 95.2 ($\pm$0.53)          & 89.5 ($\pm$0.75)          & 83.4 ($\pm$0.98)          & 88.1 ($\pm$0.9)              & 89.4 ($\pm$0.87)        \\
bisector                    & 98.9 ($\pm$0.13)          & 99.3 ($\pm$0.09)          & 99.3 ($\pm$0.09)          & \textbf{100.0 ($\pm$0.0)} & 97.7 ($\pm$0.18)          & 98.3 ($\pm$0.16)             & \underline{100.0 ($\pm$0.0)}  \\
xor                         & \textbf{100.0 ($\pm$0.0)} & \textbf{100.0 ($\pm$0.0)} & 99.2 ($\pm$0.11)          & 53.2 ($\pm$0.65)          & 99.2 ($\pm$0.12)          & \underline{100.0 ($\pm$0.0)}       & \underline{100.0 ($\pm$0.0)}  \\
parabola                    & 98.8 ($\pm$0.15)          & \textbf{100.0 ($\pm$0.0)} & 99.6 ($\pm$0.07)          & 77.8 ($\pm$0.52)          & 97.6 ($\pm$0.18)          & 97.7 ($\pm$0.17)             & \underline{100.0 ($\pm$0.0)}  \\
circle                      & 96.8 ($\pm$0.22)          & 99.3 ($\pm$0.1)           & \textbf{99.6 ($\pm$0.07)} & 52.4 ($\pm$0.67)          & 98.8 ($\pm$0.13)          & 97.6 ($\pm$0.2)              & 99.2 ($\pm$0.11)       
\end{tabular}
\caption{Datasets accuracy ($\pm$ $95^{th}$ percentile standard error) results of the best performing model. In \textbf{bold} we indicate the best performing model amongst the interpretable ones. If GBDT or NN exceeds this accuracy, the corresponding result is \underline{underlined}.}
\label{tab:results}
\end{table*}

Table \ref{tab:results} presents on each row the datasets used while on the columns the various models. Each cell contains the $95\%$ confidence interval for the accuracy of the model that obtains the best performance.

Results obtained with the new \ian\ neurons are better than those obtained by DTs and LRs (interpretable) models. Moreover, \ian's results are on par, sometimes better than, results of traditional NNs and GBDT classifiers. These last two methods, though, are not transparent.

Amongst the Heaviside, sigmoid, and $\tanh$-prod cases, we can see that the first one obtains the worst results. The reason may be that it is more challenging to train, despite being the most interpretable among the three cases. $\tanh$-prod instead performs slightly better than sigmoid, being more flexible. Sigmoid, being more straightforward to interpret than $\tanh$-prod, could be a good choice at the expense of a slight decrease in accuracy that remains, however, similar to that of a traditional neural network.

\subsection{Circle dataset example}
In order to first validate our ideas, we show what we obtained by applying a single neuron using multiplication of $2$ $\tanh$ in the case of our custom dataset circle.

\begin{figure}[ht]
\centering
    \includegraphics[width=0.9\columnwidth]{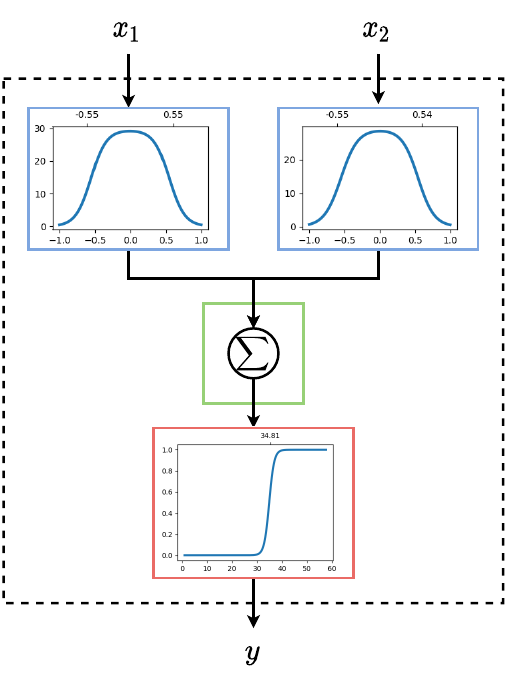}
    \caption{$\tanh$-prod Neural Network trained on the circle dataset. The figure follows the color convention used for \newron\ in Figure \ref{fig:newron}. $x_1$ and $x_2$ are the inputs of the network and $y$ is the output. The processing and activation functions are plotted with input on the $x$-axis and output on the $y$-axis. Coordinates of the inflection points are indicated above the plots.}
    \label{custom_circumference}
\end{figure}

In Figure \ref{custom_circumference} we can see how the multiplication of $\tanh$ has converged to two bells centred in $0$, while $\alpha_1$ and $\alpha_2$ have gone to $30$. According to the \ian interpretation method, values below $30$ correspond to an activation function output of $0$, while it is $1$ for values above $38$. In the middle range, the prediction is more uncertain. Combining this data with the previous prediction, we can conclude that we need the sum of the two values output by the two processing functions to be greater than $38$ to have a prediction of class $1$. Therefore, if one of the two inputs is $0$ (output $30)$, it is enough for the other to be between $-0.65$ and $0.65$ (output greater than $8$). Otherwise, we may need an output of at least $19$ from both outputs, corresponding to input values between $-0.5$ and $0.5$, i.e., the area covered by the circle. We show more examples in the appendix.

\subsection{Current limitations}
The extraction of proper rules from the network can be harrowing; in the Heaviside case, they might be too long in the sigmoid and $\tanh$-prod cases because their simplicity depends on the final value parameters. Nevertheless, methods of regularization during training or additional Rule Extraction methods may help to simplify interpretability. We defer the study of regularization to future works.

Also, we have not compared \newron\ against state-of-the-art Deep Learning models for tabular data, as our main goal was to show that our formulation was more suitable than traditional neurons compared to ``traditional'' interpretable models. Comparisons with more advanced solutions for tabular data will be the subject of future work. 

\section{Conclusions and Future Work}
\label{sec:conclusion}
We have introduced the concept of a generalized neuron and proposed three different specializations, along with the corresponding method to interpret the behavior of the network. Also, in cases where from the network we cannot devise exact rules (e.g., in the sigmoid and $\tanh$-prod cases), the structure of the neuron and the parameters allow the visualization of its behavior. Indeed, for every input, we apply the nonlinearity operation before the aggregation reducing it to a one-dimensional space allowing the analysis of each input separately.
Through universal approximation theorems, we have proved that the new structure retains the same expressive power as a standard neural network. In future studies we will investigate more in detail the expressiveness of \ian\ based models with respect to the number of layers or neurons in arbitrarily deep but width-limited networks and arbitrarily wide but depth-limited networks.
Experiments conducted on both real and synthetic datasets illustrate how our framework can outperform traditional interpretable models, Decision Trees, and Logistic Regression, and achieve similar or superior performance to standard neural networks.
In the future, we will investigate the influence of hyper-parameters (network depth, number of neurons, processing functions) and initialization on the model quality. Also, we will refine the analysis of the $\tanh$-prod case as the number of $\tanh$ increases. In addition, we will investigate \ian\ with additional processing functions, such as ReLU and SeLU. Finally, we will extend this method to other neural models, such as Recurrent, Convolutional and Graph Neural Networks.

\section{Acknowledgements}
This research was supported by the Italian Ministry of Education, University and Research (MIUR) under the grant ``Dipartimenti di eccellenza 2018--2022'' of the Department of Computer Science and the Department of Computer Engineering at Sapienza University of Rome. Partially supported by the ERC Advanced Grant 788893 AMDROMA ``Algorithmic and Mechanism Design Research in Online Markets'', the EC H2020RIA project ``SoBigData++'' (871042), and the MIUR PRIN project ALGADIMAR ``Algorithms, Games, and Digital Markets''. All content represents the opinion of the authors, which is not necessarily shared or endorsed by their respective employers and/or sponsors.

\bibliography{newron}

\clearpage
\begin{center}
\textbf{\LARGE Supplementary Materials}
\end{center}

\appendix

\section{Universality Theorems}

This is the appendix to the Universality section in the main article. In this section, we shall prove the mathematical results concerning the universal approximation properties of our \ian\ model. 
In particular, we restrict ourselves to some specific cases. We consider the cases where the processing function is the Heaviside function, a continuous sigmoidal function ,or the rescaled product of hyperbolic tangents.

\subsection{Heaviside \ian}

\begin{theorem}
\label{teo:HeavisideLp}
The finite sums of the form
\begin{equation}
    \label{reteHeav}
\psi(x)=   \sum_{j=1}^N \alpha_j H(w_j\sum_{i=1}^n H(w_{ij} (x_i - b_{ij}))- b_j) \end{equation}
with $N$ $ \in \mathbb{N}$ and  $w_{ij}, w_{j}, \alpha_j, b_{ij}, b_j  \in \mathbb{R}$ 
are dense in $L^p(\mathbb{R}^n, \mu)$ for $1 \leq p < \infty$, with $\mu$ a Radon measure on $
(\mathbb{R}^n,\mathcal{B}(\mathbb{R}^n))$ ($\mathcal{B}$ denote the Borel $\sigma$–algebra).
\end{theorem}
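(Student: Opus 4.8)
The plan is to prove density by exhibiting a concrete, exactly representable dense subclass, following the classical route through simple functions rather than through continuous functions (the latter being natural only in the sup-norm settings of Cybenko and Hornik). First I would record that the family of functions of the form (\ref{reteHeav}) is closed under real linear combinations: concatenating the hidden neurons of two such networks and rescaling the outer weights $\alpha_j$ realizes any combination $c_1\psi_1+c_2\psi_2$, so the representable functions form a vector space. Consequently it suffices to show (i) that the indicator of every bounded (half-open) box is itself of the form (\ref{reteHeav}), and (ii) that finite linear combinations of such indicators are dense in $L^p(\mathbb{R}^n,\mu)$.

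The heart of the argument is a single construction. A lone hidden neuron reproduces the indicator of an axis-aligned orthant: taking $w_{ij}=1$ and $b_{ij}=t_i$ makes the inner activations $H(x_i-t_i)=\mathbf{1}_{\{x_i\ge t_i\}}$, so the inner sum $\sum_{i=1}^n H(x_i-t_i)$ attains its maximum $n$ exactly when $x_i\ge t_i$ for all $i$; choosing $w_j=1$ and $b_j=n-\tfrac12$ then forces the outer Heaviside to output $1$ precisely on $\{x:\, x_i\ge t_i\ \forall i\}$. Writing a bounded box through $\prod_{i=1}^n\bigl(H(x_i-c_i)-H(x_i-d_i)\bigr)$ and expanding by inclusion--exclusion over its $2^n$ corners expresses its indicator as a signed sum of orthant indicators, each of which is one of the neurons just described with outer weight $\alpha=(-1)^{|S|}$. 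Thus the box indicator is \emph{exactly} a function of the form (\ref{reteHeav}) with $N=2^n$; since a bounded box has compact closure and $\mu$ is finite on compact sets, it lies in $L^p(\mathbb{R}^n,\mu)$.

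It then remains to establish that finite linear combinations of box indicators are dense in $L^p(\mathbb{R}^n,\mu)$ for $1\le p<\infty$. I would deduce this from the regularity of Radon measures: simple functions are dense in $L^p$, and because $p<\infty$ each nontrivial level set of a $p$-integrable simple function has finite measure. Outer regularity traps such a set between itself and an open superset of nearly equal $\mu$-measure, an open set decomposes as a countable union of boxes, and finitely many of them already approximate it in measure; hence each indicator of a finite-measure set is approximated in $L^p$ by an indicator of a finite union of boxes, which is a linear combination of box indicators. Combining these approximations for an arbitrary simple function, and using the vector-space closure above, yields the claim.

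The combinatorial step -- realizing boxes from the two-layer Heaviside structure -- is routine once the orthant trick is in hand. The part demanding care, and the main obstacle, is the measure-theoretic density argument: because $\mu$ is an arbitrary Radon measure (possibly infinite, atomic, or concentrated on lower-dimensional sets), I cannot compare with Lebesgue measure and must lean entirely on regularity, while also checking that the half-open boxes forced on us by the right-continuous Heaviside still generate $\mathcal{B}(\mathbb{R}^n)$ and suffice to approximate every measurable set of finite measure.
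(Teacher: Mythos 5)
Your proposal is correct and follows essentially the same route as the paper's proof: realize the indicator of a positive orthant with a single hidden neuron (unit inner weights, outer threshold just below $n$), obtain half-open box indicators by inclusion--exclusion over the $2^n$ corners, absorb linear combinations by concatenating hidden neurons, and conclude from the density of step functions over half-open rectangles in $L^p(\mathbb{R}^n,\mu)$. The only difference is that you sketch the regularity argument for that last density fact, whereas the paper simply cites it (Proposition \ref{prop:step_function_dense_Lp}).
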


In other words given, $g \in L^p(\mathbb{R}^n,\mu )$ and $\epsilon >0 $ there is a sum $\psi(x)$ of the above form for which $$||\psi-g||^p_{p} = \int_{\mathbb{R}^n} |\psi(x)-g(x)|^p d\mu(x)  < \epsilon.$$

To prove that a neural network defined as in equation (\ref{reteHeav}) is a universal approximator in $L^p, $ for $ 1 \leq p < \infty $ we exploit that step functions are dense in $L^p$ and that our network can generate step functions. 
\begin{proposition}
\label{prop:step_function_dense_Lp}
Let $\mathcal{R}$ be the set of the rectangles in $\mathbb{R}^n$ of the form
$$ R= \prod_{k=1}^n [a_k, b_k) \quad a_k, b_k \in \mathbb{R} , \; a_k < b_k 
$$
We denote by $\mathcal{F} $ the vector space on $\mathbb{R}$ generated by $ \mathds{1}_R, \: R \in \mathcal{R} $ i.e.
\begin{equation}
\label{set:F}
\mathcal{F} = \Big\{ \sum_{i=1}^m \alpha_{i} \mathds{1}_{R_i} \: \Big| \: m \in \mathbb{N}, \alpha_i \in \mathbb{R}, R_i \in \mathcal{R}\Big\} 
\end{equation}
$\mathcal{F}$ is dense in $L^p(\mathbb{R}^n, \mu)$ for $1 \leq p < \infty$, with $\mu$ a Radon measure on $(\mathbb{R}^n,\mathcal{B}(\mathbb{R}^n))$.
\end{proposition}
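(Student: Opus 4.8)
The plan is to reduce the statement to the classical density of simple functions in $L^p$ together with a single geometric approximation: that an arbitrary Borel set of finite measure can be well approximated, in the sense of symmetric difference, by a finite disjoint union of half-open rectangles. Throughout I would use the regularity properties enjoyed by any Radon measure on $\mathbb{R}^n$.

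First I would recall the standard fact that the simple functions, i.e.\ finite linear combinations $\sum_k c_k \mathds{1}_{E_k}$ with $E_k$ Borel and $\mu(E_k) < \infty$, are dense in $L^p(\mathbb{R}^n, \mu)$ for $1 \leq p < \infty$ (approximate $f \in L^p$ pointwise by simple functions dominated by $|f|$ and apply dominated convergence). Since $\mathcal{F}$ is a vector space and the $L^p$-norm is subadditive, it then suffices to approximate a single indicator $\mathds{1}_E$, with $E$ Borel and $\mu(E) < \infty$, by an element of $\mathcal{F}$. Because $\|\mathds{1}_E - \mathds{1}_A\|_p^p = \mu(E \triangle A)$ for any Borel $A$, the task reduces to: given $\epsilon > 0$, produce a finite disjoint union $A$ of rectangles in $\mathcal{R}$ with $\mu(E \triangle A) < \epsilon$.

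Next I would invoke the outer regularity of $\mu$ to pick an open set $U \supseteq E$ with $\mu(U \setminus E) < \epsilon/2$; note $\mu(U) \leq \mu(E) + \epsilon/2 < \infty$. The key geometric step is the dyadic decomposition of open sets: every open $U \subseteq \mathbb{R}^n$ is a countable disjoint union $U = \bigcup_{i=1}^{\infty} Q_i$ of half-open dyadic cubes $Q_i \in \mathcal{R}$ (each of the form $\prod_{k=1}^n [a_k, a_k + 2^{-\ell})$ for suitable $a_k$ and $\ell$ depending on $i$), obtained greedily by selecting maximal dyadic cubes contained in $U$. Since $\mu(U) = \sum_{i} \mu(Q_i) < \infty$, the tail of this series vanishes, so for $m$ large enough $\mu(U \setminus \bigcup_{i=1}^m Q_i) < \epsilon/2$. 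Setting $A = \bigcup_{i=1}^m Q_i$, disjointness gives $\mathds{1}_A = \sum_{i=1}^m \mathds{1}_{Q_i} \in \mathcal{F}$.

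Finally I would assemble the error estimate: since $E \subseteq U$ and $A \subseteq U$, we have $E \setminus A \subseteq U \setminus A$ and $A \setminus E \subseteq U \setminus E$, so $\mu(E \triangle A) \leq \mu(U \setminus A) + \mu(U \setminus E) < \epsilon$, which is precisely the required bound. The main obstacle is the decomposition step, namely guaranteeing that an arbitrary open set splits into a \emph{countable disjoint} family of rectangles of exactly the half-open product form demanded by $\mathcal{R}$; the dyadic-cube construction settles this, and it is precisely the half-open (rather than open or closed) convention that makes the cubes genuinely disjoint so that $\mathds{1}_A = \sum_i \mathds{1}_{Q_i}$. The only other point needing care is that outer regularity is indeed available, which holds because $\mu$ is a Radon measure on $\mathbb{R}^n$.
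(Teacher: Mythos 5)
Your proof is correct, but note that the paper does not actually prove this proposition itself: its entire ``proof'' is the citation \emph{See chapter 3, $L^p$ Spaces, in} \cite{cannarsa}, so any self-contained argument is necessarily a different route from what appears in the paper. What you give is the canonical argument: reduce to simple functions by dominated convergence; use the vector-space structure of $\mathcal{F}$ and the identity $\|\mathds{1}_E-\mathds{1}_A\|_p^p=\mu(E\triangle A)$ to reduce to approximating a single Borel set $E$ of finite measure; pass to an open superset $U$ of finite measure by outer regularity of the Radon measure; decompose $U$ into countably many \emph{disjoint} half-open dyadic cubes and truncate the series using countable additivity. All steps are sound, and the two points that genuinely need care --- that outer regularity is available for a locally finite Borel measure on $\mathbb{R}^n$, and that the half-open convention makes the dyadic cubes genuinely disjoint so that $\mathds{1}_A=\sum_{i=1}^m\mathds{1}_{Q_i}$ really lies in $\mathcal{F}$ --- are exactly the ones you identify and handle. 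In substance this is presumably the same proof as in the cited textbook; what your version buys is that the result is verified rather than outsourced, which also makes explicit where the Radon hypothesis (local finiteness, hence regularity) enters, something the paper's citation-only proof leaves invisible.
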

\begin{proof}
See chapter 3, $L^p$ Spaces , in \cite{cannarsa}.
\end{proof}

\begin{lemma}
\label{lemma:Heaviside_stepfunction}
Given $\rho(x) \in \mathcal{F}$, with $\mathcal{F}$ defined as in equation (\ref{set:F}), there exists a finite sum $\psi(x) $ of the form (\ref{reteHeav}) such that $\rho(x)= \psi(x) \; \forall x \in \mathbb{R}^n$.
\end{lemma}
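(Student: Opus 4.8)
The plan is to prove an \emph{exact} (pointwise) representation, reducing first to a single rectangle and then to unbounded upper orthants, each of which is realized by one neuron. First I would observe that the form (\ref{reteHeav}) is closed under finite sums: given two expressions of this form, concatenating their lists of hidden neurons and keeping the respective outer coefficients $\alpha_j$ yields another expression of the same form. Since $\rho = \sum_{i=1}^m \alpha_i \mathds{1}_{R_i}$ by the definition of $\mathcal{F}$ in (\ref{set:F}), it therefore suffices to represent a single weighted indicator $\alpha_i \mathds{1}_{R_i}$; the whole problem collapses to realizing $\mathds{1}_R$ for one rectangle $R = \prod_{k=1}^n [a_k,b_k)$.

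Next I would apply inclusion--exclusion. Starting from the one-dimensional identity $\mathds{1}_{[a_k,b_k)} = \mathds{1}_{x_k \ge a_k} - \mathds{1}_{x_k \ge b_k}$ and taking the product over coordinates, I obtain
$$\mathds{1}_R(x) = \prod_{k=1}^n \bigl(\mathds{1}_{x_k \ge a_k} - \mathds{1}_{x_k \ge b_k}\bigr) = \sum_{S \subseteq \{1,\dots,n\}} (-1)^{|S|}\, \mathds{1}_{Q_{c^S}}(x),$$
where $c^S_k = b_k$ for $k \in S$ and $c^S_k = a_k$ otherwise, and $Q_c := \{x : x_k \ge c_k \ \forall\, k\}$ is the closed upper orthant with corner $c$. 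The half-open convention $[a_k,b_k)$ is matched exactly here because $H(0)=1$ keeps each lower face closed while the subtraction opens the upper face.

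The core step is to realize one orthant with a single neuron. Setting $w_{ij}=1$ and $b_{ij}=c_i$ makes each inner Heaviside equal $\mathds{1}_{x_i \ge c_i}$, so the inner sum $\sum_{i=1}^n H(x_i - c_i)$ is integer-valued in $\{0,\dots,n\}$ and attains its maximum $n$ exactly on $Q_c$. Choosing $w_j=1$ together with the half-integer threshold $b_j = n-\tfrac12$ then gives
$$H\!\left(\sum_{i=1}^n H(x_i - c_i) - \bigl(n-\tfrac12\bigr)\right) = \mathds{1}_{Q_c}(x),$$
since the outer Heaviside fires iff the integer count reaches $n$. Because the outer coefficients in (\ref{reteHeav}) are unconstrained reals, I may assign this neuron the weight $\alpha_i(-1)^{|S|}$; ranging over all subsets $S$ and all rectangles $i$ then produces a finite sum of the form (\ref{reteHeav}), with $N = m\,2^n$ neurons, that equals $\rho$ at every $x \in \mathbb{R}^n$.

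The main obstacle is structural rather than computational: the form (\ref{reteHeav}) constrains the inner sum to run over exactly the $n$ input coordinates, one Heaviside apiece, so a single neuron cannot directly encode the \emph{two} bounds per coordinate that a rectangle requires. The inclusion--exclusion decomposition into orthants is precisely what removes this difficulty, since each orthant imposes only a single lower bound per coordinate and hence fits one neuron. The remaining care --- tracking the half-open boundaries and selecting the threshold $n-\tfrac12$ against an integer-valued count --- is dispatched by the two identities above.
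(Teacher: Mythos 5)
Your proof is correct and follows essentially the same route as the paper's: reduce to a single rectangle by linearity, write its indicator as an alternating (inclusion--exclusion) combination of closed upper orthants anchored at the rectangle's vertices, and realize each orthant with one neuron whose inner Heavisides count satisfied coordinate constraints and whose outer Heaviside thresholds that count at $n$. Your subset-indexed signs $(-1)^{|S|}$ and the explicit threshold $b_j = n-\tfrac12$ are just a cleaner formalization of what the paper states as ``opposite values at adjacent vertices'' and ``$b_j > n-1$''.
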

\begin{proof}

To prove that a neural network described as in equation (\ref{reteHeav}) can generate step functions we proceed in two steps. First, we show how we can obtain the indicator functions of orthants from the first layer of the network. Then we show how, starting from these, we can obtain the step functions.

An \textit{orthant} is the analogue in $n$-dimensional Euclidean space of a quadrant in $\mathbb{R}^2$ or an octant in $\mathbb{R}^3$.
We denote by\textit{ translated orthant} an orthant with origin in a point different from the origin of the Euclidean space $O$.
Let $A$ be a point in the $n$-dimensional Euclidean space, and let us consider the intersection of $n$ mutually orthogonal half-spaces intersecting in $A$. By independent selections of half-space signs with respect to $A$ (i.e. to the right or left of $A$) $2^n$ orthants are formed.

Now we shall see how to obtain translated orthant with origin in in a point $A$ of coordinates $(a_1,a_2,...,a_n)$ from the first layer of the network i.e. $ \sum_{i=1}^n H(w_{i} (x_i - b_{i})) $.

For this purpose we can take $ w_i=1  \quad \forall i \in \{1,...,n\}$.

The output of $ \sum_{i=1}^n H(x_i - b_{i}) \in \{0,...,n\}$ and depends on how many of the $n$ Heaviside functions are activated.
We obtain the translated orthant with origin in $A$ by choosing $ b_i= a_i \quad \forall i \in \{1,...,n\}$. In fact,

\begin{equation*}
H(x_i - a_{i})= 
\begin{cases}
0 \: & \text{ if } x_i< a_i \\ 
1 \: & \text{ if }  x_i \geq a_i.  \\
\end{cases}
\end{equation*}

The $i$-th Heaviside is active in the half-space $ x_i \geq a_i $ delimited  by the hyperplane $ x_i= a_i $ that is orthogonal to the $i$-th axis. 
Therefore, the Euclidian space $\mathbb{R}^n$ is divided in $2^n$ regions according to which value the function $\sum_{i=1}^n H( x_i - a_{i})$ takes in each region. See Figure \ref{fig_Orthant_A} for an example in $ \mathbb{R}^2$.

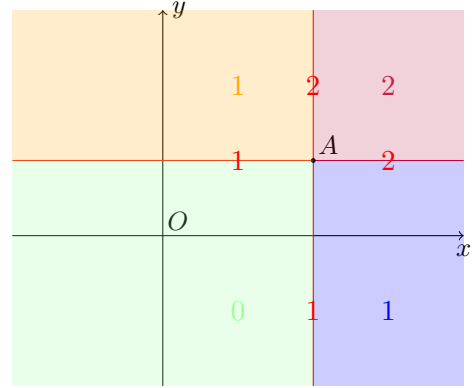
\begin{figure}[ht]
\begin{tikzpicture}[]

\draw[->] (-3,0)--(3,0) node[below]{$x$};
\draw[->] (-1,-2)--(-1,3) node[right]{$y$};
\draw[red] (1,-2)--(1,3) ;
\draw[red] (-3,1)--(3,1);
\draw (-0.8,0.2) node{$O$};
\draw (1.2,1.2) node{$A$};
\filldraw[black] (1,1) circle (0.7pt);
\draw  [draw opacity=0][fill={brightmaroon}  ,fill opacity=0.2 ] (1,1) -- (3,1) -- (3,3) -- (1,3) -- cycle;
\draw  [draw opacity=0][fill={blue}  ,fill opacity=0.2 ] (1,1) -- (3,1) -- (3,-2) -- (1,-2) -- cycle;
\draw  [draw opacity=0][fill={chromeyellow}  ,fill opacity=0.2 ] (1,1) -- (-3,1) -- (-3,3) -- (1,3) -- cycle;
\draw  [draw opacity=0][fill={mintgreen}, fill opacity=0.2 ] (-3,1) -- (1,1) -- (1,-2) -- (-3,-2) -- cycle;
\draw[brightmaroon] (2,2) node{\large\textbf{$2$}};
\draw[blue] (2,-1) node{\large\textbf{$1$}};
\draw[chromeyellow] (0,2) node{\large\textbf{$1$}};
\draw[mintgreen] (0,-1) node{\large\textbf{$0$}};
\draw[red] (1,2) node{\large\textbf{$2$}};
\draw[red] (1,-1) node{\large\textbf{$1$}};
\draw[red] (2,1) node{\large\textbf{$2$}};
\draw[red] (0,1) node{\large\textbf{$1$}};

\end{tikzpicture}
\caption{Partition of $ \mathbb{R}^2, $ according to output of the function $ H(x_1-a_1) + H(x_2- a_2) $.  $A$ is a point of coordinates $(a_1,a_2)$.}
\label{fig_Orthant_A}
\end{figure}

There is only one region in which the output of the sum is $n$, which corresponds to the orthant in which the condition $x_i \geq a_i \: \forall i= 1,...,n $ holds. We denote it as \textit{positive othant} (the red colored orthant in the example shown in Figure \ref{fig_Orthant_A}).

Going back to equation (\ref{reteHeav}), let us now consider the Heaviside function applied after the sum. As before, we can choose $w_j=1$. If we take $b_j > n-1$, the value of the output is $0$ for each of the $2^n$ orthants except for the positive orthant. This way, we get the indicator function of the positive orthant.

The indicator function of a rectangle in $\mathcal{R}$ can be obtained as a linear combination of the indicator function of the positive orthants centered in the vertices of the rectangle. See Figure \ref{SquareR2} for an example of the procedure in $\mathbb{R}^2$.

In general, the procedure involves considering a linear combination of indicator functions of positive orthants centered in the vertices of the rectangle in such a way that opposite values are assigned to the orthants corresponding to adjacent vertices.

For example, suppose we want to obtain the indicator function of the right-closed left-open square $[0,1)^2$ in $\mathbb{R}^2$ (see the illustration in Figure \ref{SquareR2}). Denoting by 
$\mathds{1}_{(x_P,y_P)\llcorner }$ the indicator function of the positive orthant centered in the point of coordinates $(x_P,y_P)$, we can write:
$$\mathds{1}_{[0,1)^2}= \mathds{1}_{(0,0) \llcorner } - \mathds{1}_{(1,0)\llcorner } - \mathds{1}_{(0,1)\llcorner } + \mathds{1}_{(1,1)\llcorner }.$$

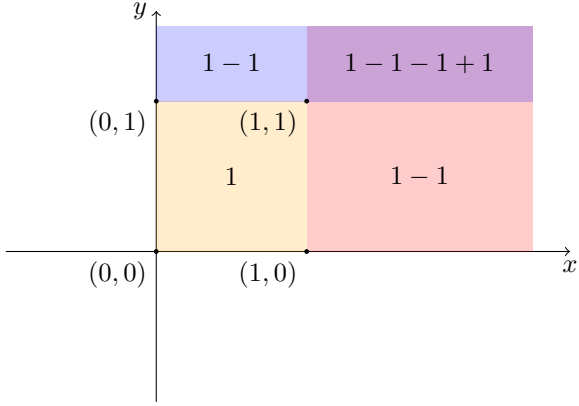
\begin{figure}[ht]
\begin{tikzpicture}[]

\draw[->] (-3,0)--(4.5,0) node[below]{$x$};
\draw[->] (-1,-2)--(-1,3.2) node[left]{$y$};
\draw  [draw opacity=0 ][fill={chromeyellow}  ,fill opacity=0.2 ](1,0) -- (1,2) -- (-1,2) --(-1,0)  -- cycle ;
\draw  [draw opacity=0 ][fill={red}  ,fill opacity=0.2 ](1,0) -- (1,3) -- (4,3) --(4,0)  -- cycle ;
\draw  [draw opacity=0 ][fill={blue}  ,fill opacity=0.2 ](-1,2) -- (4,2) -- (4,3) --(-1,3)  -- cycle ;
\draw (-1,0)node[below left ]{$ (0,0) $};
\filldraw[black] (-1,0) circle (0.7pt);
\draw (1,0)node[below left]{$ (1,0) $};
\filldraw[black] (1,0) circle (0.7pt);
\draw (-1,2)node[below left]{$ (0,1)$};
\filldraw[black] (-1,2) circle (0.7pt);
\draw (1,2)node[below left]{$ (1,1)$};
\filldraw[black] (1,2) circle (0.7pt);
\draw (0,1)node{$1$};
\draw (2.5,1)node{$1 -1$};
\draw (0,2.5)node{$1 -1$};
\draw (2.5,2.5)node{$1 -1 -1 +1$};
\end{tikzpicture}
\caption{How to obtain the indicator function on the square $[0,1)^2$ from the linear combination of four indicator functions of positive orthants  centered in the vertices of $[0,1)^2$. 
$\mathds{1}_{[0,1)^2}= \mathds{1}_{(0,0) \llcorner } - \mathds{1}_{(1,0)\llcorner } - \mathds{1}_{(0,1)\llcorner } + \mathds{1}_{(1,1)\llcorner }.$
The numbers in the orthants shows the sum of the indicator functions that are active in that orthant.
For instance if $x=(x_1,x_2)$ belongs to the blue part of the plane, i.e. it is true that  $0<x_1<1$ and $x_2>1$, we have that
$\mathds{1}_{(0,0) \llcorner}(x) - \mathds{1}_{(1,0)\llcorner}(x) - \mathds{1}_{(0,1)\llcorner}(x) + \mathds{1}_{(1,1)\llcorner}(x) = 1-0-1+0=1-1.$ } 
\label{SquareR2}
\end{figure} 

Now suppose we want the linear combination of the indicator functions of $K$ rectangles with coefficents $ \alpha_1,... \alpha_{K}$. With suitably chosen coefficients the indicator function of a rectangle can be written as $$\sum_{l=1}^{2^n} (-1)^l H(w_{jl}\sum_{i=1}^n H(w_{ij} (x_i - b_{ij}))- b_{jl})$$
that replacing $H(w_{jl}\sum_{i=1}^n H(w_{ij} (x_i - b_{ij}))- b_{jl})$ by $H_l$, to abbreviate the notation becomes
$$\sum_{l=1}^{2^n} (-1)^l H_l .$$ 

The linear combination of the indicator functions of $K$ rectangles with coefficents $ \alpha_1,... \alpha_{K}$ can be derived as 
\begin{equation}
\label{linear_combination}\sum_ {k=1}^{K}   \alpha_k \sum_{l=1}^{2^n}  (-1)^l H_{lk}. \end{equation}
The summation (\ref{linear_combination}) can be written as a single sum, defining a sequence $
\beta_j= (-1)^{j}\alpha_{m} \text{ with } m=\lceil{{\frac{j}{2^n}} }\rceil \text{ for } j= 1,...,2^nK$. Thus (\ref{linear_combination}) becomes  $$ \sum_ {j=1}^{N= 2^nK} \beta_j H_j$$
that is an equation of form (\ref{reteHeav}).
We have therefore shown that for every step function $\rho$ in $\mathcal{F}$ there are $N \in \mathbb{N}$ and $\alpha_j,$ $w_{ij},$ $b_{ij},$ $b_{j}, w_j \in \mathbb{R}$ such that the sum in equation (\ref{reteHeav}) is equal to $\rho$.
\end{proof}

\begin{proof}[Proof of Theorem \ref{teo:HeavisideLp}]
The theorem follows immediately from Lemma \ref{lemma:Heaviside_stepfunction} and Proposition \ref{prop:step_function_dense_Lp}. 
\end{proof}


\begin{remark}
\label{remark:rectangles-1}
In  Lemma \ref{lemma:Heaviside_stepfunction} we proved that a network defined as in equation (\ref{reteHeav}) can represent functions belonging to set $\mathcal{F}$ defined as in equation (\ref{set:F}). 
Note that if the input is bounded, we can obtain indicator functions of other kinds of sets.
For example, suppose $x \in [0,1]^n$. If we choose $w_{ij}=1$ and $b_{ij}< 0 \; \forall i,j$ and if we choose the weights of the second layer so that they don't operate any transformation, we can obtain the indicator function of $[0,1]^n$.
By a suitable choice of parameters, (\ref{reteHeav}) may also become the indicator functions of any hyperplane $x_i=0$ or $x_i=1$ for $i \in \{1,..,n \}$. Furthermore we can obtain any rectangle of dimension $n-1$ that belongs to an hyperplane of the form $ x_i=1 $ or $x_i=0$.
\end{remark}

We have proven in Lemma \ref{lemma:Heaviside_stepfunction} that a network formulated as in equation (\ref{reteHeav}) can represent step functions. By this property and by Proposition \ref{th6} we shall show that it can approximate Lebesgue measurable functions on any finite space, for example the unit $n$-dimensional cube $[0,1]^n$.

We denote by $I_n$ the closed $n$-dimensional cube $[0,1]^n$. 
We denote by $M^n$ the set of measurable functions with respect to Lebesgue measure $m$, on $I_n$, with the metric $d_m$ defined as follows.
Let be $f,g \in M^n$, 
$$d_m(f, g) = \inf \{\epsilon > 0: m \{ x: |f(x) - g(x)| > \epsilon \} < \epsilon \}$$

We remark that $d_m$-convergence is equivalent to convergence in measure (see Lemma 2.1 in \cite{hornik89}).
\begin{theorem}
\label{teo:HeavisideMeas}

The finite sums of the form (\ref{reteHeav}) with $N$ $ \in \mathbb{N}$ and $w_{ij}, w_{j}, \alpha_j, b_{ij}, b_j \in \mathbb{R}$ are $d_m$-dense in $M^n$. $M^n$ is the set of Lebesgue measurable functions on $I_n$ .
\end{theorem}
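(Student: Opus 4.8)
The plan is to reduce the statement to a purely measure-theoretic density fact and then invoke the representation result already established. By Lemma \ref{lemma:Heaviside_stepfunction}, every step function $\rho \in \mathcal{F}$ is \emph{exactly} representable as a finite sum of the form (\ref{reteHeav}); hence the family of network functions contains $\mathcal{F}$ (restricted to $I_n$, with the half-open boundary handled as in Remark \ref{remark:rectangles-1}). Since any family of functions that contains a $d_m$-dense subset is itself $d_m$-dense, it suffices to show that $\mathcal{F}$ is $d_m$-dense in $M^n$ --- the density of step/simple functions in measure supplied by Proposition \ref{th6}. Thus the theorem follows immediately by combining Lemma \ref{lemma:Heaviside_stepfunction} with Proposition \ref{th6}, exactly as Theorem \ref{teo:HeavisideLp} followed from its $L^p$ analogue.

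The substance therefore lies in the density of step functions in $M^n$ under convergence in measure, which I would establish in two approximation steps. Given $g \in M^n$ and $\epsilon > 0$, first approximate $g$ by a simple function: because $m(I_n)=1$ is finite, every measurable function is an almost-everywhere pointwise limit of simple functions, and on a finite-measure space almost-everywhere convergence implies convergence in measure, so there is a simple $s = \sum_{j} c_j \mathds{1}_{E_j}$ with $d_m(g,s)$ arbitrarily small. Second, replace each measurable set $E_j$ by a finite union of half-open rectangles: by the outer regularity of Lebesgue measure --- recall that $m$ is built from coverings by boxes --- for every $\delta>0$ there is a finite rectangle family $R_j$ (a finite union of elements of $\mathcal{R}$) with $m(E_j \triangle R_j) < \delta$. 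Substituting $\mathds{1}_{R_j}$ for $\mathds{1}_{E_j}$ alters $s$ only on a set of small measure, producing $\rho = \sum_j c_j \mathds{1}_{R_j} \in \mathcal{F}$ with $d_m(s,\rho)$ small. A triangle-inequality estimate controlling the bad set $\{x : |g(x)-\rho(x)|>\epsilon\}$ by the union of the two exceptional sets then gives $d_m(g,\rho) < \epsilon$ for suitable parameter choices.

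I expect the only genuine obstacle to be the second step: approximating an \emph{arbitrary} Lebesgue-measurable subset of $I_n$ by a finite union of rectangles up to arbitrarily small symmetric-difference measure. This is where both the finiteness of the measure and the rectangle-based construction of $m$ are indispensable, and it is the step that could not be taken for granted on an unbounded domain. Everything else is routine: the simple-function approximation is standard, the passage between convergence in measure and the $d_m$-metric was already noted to be an equivalence (Lemma 2.1 in \cite{hornik89}), and the ``a superset of a dense set is dense'' argument is immediate. The minor bookkeeping about the closed cube $I_n$ versus the half-open rectangles of $\mathcal{F}$ is harmless, since the discrepancy is confined to a set of Lebesgue measure zero and is in any case covered by Remark \ref{remark:rectangles-1}.
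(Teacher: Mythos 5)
Your proposal is correct, and its top-level skeleton coincides with the paper's: use Lemma \ref{lemma:Heaviside_stepfunction} to see that the networks of the form (\ref{reteHeav}) contain every step function in $\mathcal{F}$, then reduce the theorem to the $d_m$-density of step functions in $M^n$, using the finiteness of $m(I_n)$ to pass to convergence in measure. The divergence is in how that density is handled. The paper does not prove it: it cites Proposition \ref{th6} (Stein's Theorem 4.3), i.e.\ every measurable function is an a.e.\ pointwise limit of step functions, and then just notes that on a finite measure space pointwise a.e.\ convergence implies convergence in measure, hence $d_m$-convergence. You instead re-derive the density directly in the $d_m$ metric in two steps: approximate $g$ by a simple function $s=\sum_j c_j \mathds{1}_{E_j}$ (a.e.\ approximation by simple functions plus finiteness of the measure), then replace each measurable set $E_j$ by a finite union of half-open rectangles with $m(E_j \triangle R_j)<\delta$ (Littlewood's first principle, i.e.\ regularity of Lebesgue measure), and conclude with a triangle-inequality estimate on the exceptional sets. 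This is essentially a self-contained proof of the result the paper outsources to \cite{stein}, so your argument is longer but more elementary and makes explicit where finiteness of the measure and the rectangle structure enter; the paper's proof buys brevity at the price of a citation. Your side remarks are also sound: the closed cube $I_n$ versus half-open rectangles discrepancy is a Lebesgue-null set and hence invisible to $d_m$ (Remark \ref{remark:rectangles-1} is only genuinely needed for the uniform-approximation theorem), and the equivalence of $d_m$-convergence with convergence in measure is exactly the fact the paper takes from \cite{hornik89}.
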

This means that, given $g$ measurable with respect to the Lebesgue measure $m$ on $I_n$, and given an $\epsilon>0$, there is a sum $\psi $ of the form (\ref{reteHeav}) such that $d_m(\psi,g) < \epsilon.$

\begin{proposition}\label{th6}
Suppose $f$ is measurable on $\mathbb{R}^n$. Then there exists a sequence of step functions $\{\rho_k\}_{k=1}^\infty$ that converges pointwise to $f(x)$ for almost every $x$.
\end{proposition}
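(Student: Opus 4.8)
The plan is to follow the classical two-stage approximation scheme from real analysis: first approximate an arbitrary measurable $f$ by \emph{simple} functions (finite linear combinations of indicators of arbitrary measurable sets), and then approximate each simple function by a \emph{step} function (a finite linear combination of indicators of rectangles, i.e.\ an element of the space $\mathcal{F}$ appearing in Proposition~\ref{prop:step_function_dense_Lp}). Splicing these two approximations together while controlling the errors will yield the desired pointwise-almost-everywhere convergent sequence.

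First I would reduce to non-negative functions by writing $f = f^+ - f^-$, so that it suffices to treat each part separately. For a non-negative measurable $f$, I would invoke the standard simple-function approximation theorem, which produces an increasing sequence of simple functions $s_k \uparrow f$ converging pointwise everywhere. The usual simultaneous truncation in height and in domain (restricting to $\{|x| \le k\}$ and to $\{f \le k\}$) guarantees that each $s_k$ is supported on a set of finite measure, so that $s_k = \sum_i c_i \mathds{1}_{E_i}$ is a finite linear combination of indicators of sets $E_i$ of finite measure.

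The second stage replaces each simple function by a step function. By the regularity of Lebesgue measure, every finite-measure set $E$ can be approximated by a finite union of rectangles $R$ with $m(E \triangle R)$ arbitrarily small, so each indicator $\mathds{1}_E$ is matched by the step function $\mathds{1}_R \in \mathcal{F}$ up to a set of small measure. Summing the finitely many such approximations for the pieces of $s_k$, I would produce, for each $k$, a step function $\rho_k$ differing from $s_k$ only on a set of measure $< 2^{-k}$. To upgrade this closeness-in-measure to pointwise a.e.\ convergence, observe that the measures of these ``bad'' sets are summable; a Borel--Cantelli argument then shows that the set of points lying in infinitely many bad sets is null, and off this null set $\rho_k(x)$ eventually coincides with $s_k(x)$. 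Since $s_k(x) \to f(x)$ there, we conclude $\rho_k \to f$ pointwise almost everywhere.

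I expect the main obstacle to be precisely this last upgrade: the regularity of Lebesgue measure only yields approximation of sets \emph{in measure}, which on its own gives convergence in measure rather than the pointwise-a.e.\ statement required. The summability device $2^{-k}$ together with Borel--Cantelli is what bridges this gap, and some care is needed so that combining the two error sources (simple-to-$f$ and step-to-simple) preserves almost-everywhere convergence. The unboundedness of the domain $\mathbb{R}^n$ is only a minor additional wrinkle, handled cleanly by the height-and-domain truncation already built into the simple-function construction.
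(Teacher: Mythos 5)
Your proof is correct. The paper gives no argument of its own for this proposition---it simply cites Theorem 4.3, p.~32 of \cite{stein}---and your two-stage scheme (simple-function approximation of $f$, then approximation of each finite-measure level set by finite unions of rectangles, with the summable $2^{-k}$ errors and a Borel--Cantelli argument upgrading convergence in measure to pointwise a.e.\ convergence) is essentially the proof of that cited theorem, so the two approaches coincide.
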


\begin{proof}
See Theorem 4.3 p. 32 in \cite{stein}.
\end{proof}

\begin{proof}[Proof of Theorem \ref{teo:HeavisideMeas}]

Given any measurable function, by Proposition \ref{th6} there exists a sequence of step functions that converge to it pointwise. By Lemma \ref{lemma:Heaviside_stepfunction} we have that equation (\ref{reteHeav}) can generate step functions. 
Now $m(I_n)=1$ and for a finite measure space pointwise convergence implies convergence in measure, this concludes the prof. 
\end{proof}

\begin{remark}
Notice that for Theorem \ref{teo:HeavisideMeas} we don't need the fact that $I_n$, is a closed set. For this theorem in fact it is sufficient that it is a bounded set (so that its Lebesgue measure is finite).
The compactness of $I_n$ will be necessary for the next theorem.

\end{remark} 



\begin{theorem}
\label{teo:Heaviside_continuous}
Given $g \in C(I_n) $ and given $\epsilon>0 $ there is a sum $\psi(x)$ of the form (\ref{reteHeav}) such that $$|\psi(x)-g(x)| < \epsilon \quad \forall x \in I_n.$$
\end{theorem}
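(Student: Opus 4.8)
The plan is to reduce the uniform-approximation statement to the exact representability of step functions already established in Lemma \ref{lemma:Heaviside_stepfunction}. Since $g$ is continuous on the compact cube $I_n$, it is uniformly continuous, and this is the crucial feature distinguishing this case from the $L^p$ and measurable cases: here I cannot merely invoke density of step functions, but must control the error \emph{uniformly}. The strategy is therefore to build a single step function $\rho$ that is piecewise constant on a grid fine enough that $g$ varies by less than $\epsilon$ on each cell, show $\|\rho - g\|_\infty < \epsilon$ directly, and then note that $\rho \in \mathcal{F}$ so that by Lemma \ref{lemma:Heaviside_stepfunction} it is realized \emph{exactly} by some $\psi$ of the form (\ref{reteHeav}).

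First I would fix $\delta>0$ from uniform continuity, so that $|g(x)-g(y)|<\epsilon$ whenever $\|x-y\|<\delta$. Next I would partition $[0,1]$ into finitely many subintervals of length smaller than $\delta/\sqrt{n}$ and take the induced grid of half-open rectangles $R_k = \prod_{l=1}^n [t^{(l)}_{j_l-1}, t^{(l)}_{j_l})$, each of diameter $<\delta$. Choosing a representative point $x_k \in R_k\cap I_n$ in each cell, I would set $\rho = \sum_k g(x_k)\,\mathds{1}_{R_k}$. Then for $x \in R_k$ one has $|\rho(x)-g(x)| = |g(x_k)-g(x)| < \epsilon$, giving the required uniform bound on whatever portion of the cube these rectangles cover.

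The hard part will be the boundary. The rectangles of $\mathcal{R}$ are right-open, $[a,b)$, so the grid tiles only $[0,1)^n$ and leaves the closed faces $\{x_i=1\}$ uncovered; a naive construction would fail to approximate $g$ on those faces. This is precisely where Remark \ref{remark:rectangles-1} is needed: for bounded input it supplies indicator functions of the hyperplanes $x_i=1$ and of the lower-dimensional rectangles lying on them, so I would append the corresponding boundary terms, each weighted by $g$ evaluated at a nearby grid point, keeping every cell diameter below $\delta$ and hence the error below $\epsilon$ on all of $I_n$. (Equivalently, one could take a grid extending slightly past $1$ in each coordinate so that the half-open cells already cover the closed cube, avoiding separate boundary pieces.)

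Finally, the assembled $\rho$ is a finite linear combination of indicator functions of rectangles (and boundary rectangles), hence an element of $\mathcal{F}$ as in (\ref{set:F}). By Lemma \ref{lemma:Heaviside_stepfunction} there is a $\psi$ of the form (\ref{reteHeav}) with $\psi \equiv \rho$, so $|\psi(x)-g(x)|<\epsilon$ for all $x\in I_n$, completing the argument. The only genuinely delicate point is the boundary bookkeeping — guaranteeing that the chosen cells truly cover the \emph{closed} cube while remaining inside the representable family $\mathcal{F}$ — and this is exactly the gap that Remark \ref{remark:rectangles-1} is designed to close.
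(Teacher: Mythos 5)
Your proposal is correct and follows essentially the same route as the paper's proof: uniform continuity of $g$ on the compact cube, a grid of half-open rectangles with side length below $\delta$, coefficients $g(x_j)$ at representative points, handling of the uncovered faces $\{x_i=1\}$ via the boundary rectangles of Remark \ref{remark:rectangles-1}, and exact realization of the resulting step function by Lemma \ref{lemma:Heaviside_stepfunction}. The paper does exactly this (it even uses the same example of cubes of side $1/\tilde{m}$, $\tilde{m} > \lceil 1/\delta\rceil$, plus the $(n-1)$-dimensional boundary pieces), so the only difference is your parenthetical alternative of extending the grid past $1$, which the paper does not pursue.
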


\begin{proof}
Let $g$ be a continuous function from $I_n$ to $\mathbb{R}$, by the compactness of $I_n$ follows that $g$ is also uniformly continuous (see Theorem 4.19  p. 91 in \cite{rudin_principles}). In other words, for any $\epsilon> 0$, there exists $\delta >0$ such that for every $x, x' \in [0,1]^n$ such that $||x-x' ||_\infty < \delta $ it is true that $ |g(x) -g(x')| < \epsilon $.
To prove the statement of Theorem \ref{teo:Heaviside_continuous}, let $\epsilon>0$ be given, and let $\delta>0$ be chosen according to the definition of uniform continuity.

As we have already seen in Lemma \ref{lemma:Heaviside_stepfunction} the neural network described in (\ref{reteHeav}) can generate step functions with support on right-open left-closed $n$-dimensional rectangles and on $(n-1)$-dimensional rectangles that belongs to an hyperplane of equation $x_i=0$ or $x_i=1 $ for some $i \in \{1,...,n \}$ as seen in Remark \ref{remark:rectangles-1}.
There exists a partition of $[0,1]^n$, $(R_1,...,R_N)$, consisting of right-open left-closed $n$-dimensional rectangles and of $(n-1)$-dimensional rectangles that belongs to an hyperplane of equation $x_i=0$ or $x_i=1 $ for some $i \in \{1,...,n \}$, such that all side lengths are no greater than $\delta$. An example of a set of rectangles with this property is the set of right-open left-closed cubes of side length $\frac{1}{\tilde{m}}, \tilde{m}> \lceil \frac{1}{\delta} \rceil$ with the $(n-1)$-dimensional rectangles with the same side length which we need to cover all the boundary of $[0,1]^n$ not covered by the right-open left-closed rectangles.

Suppose that for all $j \in \{1,...,N\}$ we choose $x_j \in R_j$, and we set $\alpha_j=g(x_j)$.
If $x \in [0,1]^n$ there is $j$ so that $x \in R_j$,
hence $x$ satisfies
$||x-x_j||_\infty \leq \delta$, 
and consequentially 
$ |g(x) - g(x_j)| \leq \epsilon. $
Therefore the step function
$h= \sum_{j=1}^N \alpha_j \mathds{1}_{R_j}$
satisfies

$$\sup_{x \in I_n} |h(x)-g(x)| =$$
$$=\sup_{j \in \{1,...,N\}} \sup_{x \in R_j} |h(x)-g(x)|= $$
$$=\sup_{j \in \{1,...,N\}} \sup_{x \in R_j} |\alpha_j-g(x)| \leq \epsilon $$

\end{proof}

\subsection{Sigmoid \ian}


\begin{definition}
A function
$ \sigma : \mathbb{R} \to [0,1]$ is called sigmoidal if $$ \lim_{x \to - \infty} \sigma(x) =0, \quad \lim_{x \to + \infty} \sigma(x) =1 $$
\end{definition}

\begin{theorem}
\label{teo:sigmoide}
Let $\sigma$ be a continuos sigmoidal function. Then the finite sums of the form:

\begin{equation}
    \label{eq:reteSigmoide1_appendix}
    \psi(x)=\sum_{j=1}^N \alpha_j \sigma(w_j(\sum_{i=1}^n \sigma(w_{ij}( x_i -b_{ij}))-b_j))
\end{equation}
with $w_{ij}, \alpha_j, b_{ij}, b_j, w_j \in \mathbb{R}$ and $N \in \mathbb{N}$ are dense in $C(I_n)$.
\end{theorem}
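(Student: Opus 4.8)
The plan is to reduce the statement to the classical single-hidden-layer universal approximation theorem of \cite{cybenko}. Since $\sigma$ is a continuous sigmoidal function, that result already guarantees that the finite sums $\Phi(x)=\sum_{j=1}^{N}\alpha_j\,\sigma(\langle a_j,x\rangle+\theta_j)$, with $a_j\in\mathbb{R}^n$ and $\theta_j\in\mathbb{R}$, are dense in $C(I_n)$. It therefore suffices to show that every single Cybenko term $\alpha_j\,\sigma(\langle a_j,x\rangle+\theta_j)$ can be approximated uniformly on $I_n$ by one summand of the form appearing in (\ref{eq:reteSigmoide1_appendix}). Indeed, concatenating $N$ such approximating summands again yields a function of the form (\ref{eq:reteSigmoide1_appendix}) (the outer sum simply acquires more terms), so the triangle inequality transfers the approximation to the whole sum $\Phi$ and thence to any target $g\in C(I_n)$. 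The entire problem thus collapses to one question: can the inner aggregation $w_j\big(\sum_{i=1}^n\sigma(w_{ij}(x_i-b_{ij}))-b_j\big)$ reproduce, uniformly on $I_n$, an arbitrary affine map $\langle a_j,x\rangle+\theta_j$? If so, applying the outer $\sigma$ recovers the Cybenko term.

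This inner affine approximation is the crux, and I would extract it from the \emph{local linear regime} of $\sigma$. Fix a point $t_0$ where $\sigma$ is differentiable with $c:=\sigma'(t_0)\neq 0$, and for each coordinate choose $b_{ij}$ so that the inner argument sits at $t_0+w_{ij}x_i$, with small inner weights $w_{ij}=a_{ij}/(c\,w_j)$ while the outer weight satisfies $w_j\to\infty$. A first-order expansion gives $\sigma(w_{ij}(x_i-b_{ij}))=\sigma(t_0)+c\,w_{ij}x_i+r_{ij}(x)$ with remainder $r_{ij}=o(w_{ij})=o(1/w_j)$ uniformly for $x_i\in[0,1]$. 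Multiplying the aggregation by $w_j$ turns the linear parts into exactly $\sum_i a_{ij}x_i$, collects the constants $w_j\,n\,\sigma(t_0)$ which are absorbed by the free scalar $b_j$, and leaves the amplified remainder $w_j\sum_i r_{ij}$ of order $n\,w_j\cdot o(1/w_j)\to 0$. Hence the inner expression converges uniformly on $I_n$ to $\langle a_j,x\rangle+\theta_j$. Since this affine function has bounded range on the compact $I_n$, all approximants lie in a fixed compact interval on which $\sigma$ is uniformly continuous, so closeness of the arguments yields $\sigma(\text{inner})\to\sigma(\langle a_j,x\rangle+\theta_j)$ uniformly; multiplying by the fixed $\alpha_j$ finishes the approximation of a single Cybenko term.

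The main obstacle is exactly the ingredient used above: the existence of a point $t_0$ at which $\sigma$ is differentiable with nonzero derivative. For the logistic sigmoid used in practice this is immediate, and for any monotone continuous sigmoidal function differentiability holds almost everywhere with a point of positive derivative, so the hypothesis is essentially costless. For a genuinely pathological continuous sigmoidal lacking such a point, I would instead route through the already-established Theorem \ref{teo:Heaviside_continuous}: approximate $g$ uniformly by a Heaviside-\ian\ step function, then replace each Heaviside by a steep sigmoid $\sigma(\lambda\,\cdot)$ with $\lambda\to\infty$. Away from the (arbitrarily thin) transition layers the sigmoids are exponentially close to $\{0,1\}$, so the orthant-cancellation identities hold approximately; inside the transition layers the smoothed network only takes intermediate values, and because the underlying partition can be refined until the step function has jumps bounded by the modulus of continuity of $g$, those intermediate values remain within $O(\epsilon)$ of $g$. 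This keeps the smoothed network uniformly close to the continuous target despite the loss of exact cancellation, and I expect the careful bookkeeping of these transition-layer estimates to be the most delicate part of that alternative route.
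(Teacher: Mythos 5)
Your primary route --- reducing to Cybenko's one-hidden-layer theorem by forcing the inner \ian\ layer to reproduce an arbitrary affine map in the locally-linear regime of $\sigma$ --- is internally sound and genuinely different from the paper's proof (which is nonconstructive: Hahn--Banach plus Riesz representation, then steep-sigmoid limits under dominated convergence, then the Heaviside rectangle construction to show the annihilating measure is null). But your route proves a weaker theorem than the one stated. The linearization needs a point $t_0$ at which $\sigma$ has a finite nonzero derivative, and your claim that this is ``essentially costless'' for monotone continuous sigmoidals is false: the Cantor function (devil's staircase), extended by $0$ to the left of $[0,1]$ and by $1$ to the right, is a continuous monotone sigmoidal whose derivative is $0$ at every point where it exists, so it admits no such $t_0$. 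Worse, the paper's definition of a sigmoidal function does not require monotonicity at all, so even ``differentiable almost everywhere'' is unavailable. Hence for the theorem as stated, your main argument has a genuine hole and everything rests on the fallback.

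The fallback (smooth the Heaviside-\ian\ network of Theorem \ref{teo:Heaviside_continuous} by steep sigmoids) is the right idea --- it is the same passage between sigmoids and step functions the paper uses, run constructively instead of inside a duality argument --- but the step you defer is exactly where the difficulty lives. In the transition slabs the exact orthant cancellations of Lemma \ref{lemma:Heaviside_stepfunction} fail, and near a corner where $2^n$ cells of the partition meet, the smoothed network equals a weighted sum of the neighboring cell values whose weights sum to $1$ but need \emph{not} be nonnegative: in $\mathbb{R}^2$ with outer threshold $b\in(1,2)$, intermediate inner values $s,t$ with $s,t>b-1$ but $s+t<b$ give the lower-left cell the weight $1-\tau(s)-\tau(t)+\theta\approx -1$. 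So the network's values in those regions are not convex combinations of nearby values of $g$, contrary to what your sketch assumes; one must instead bound the total absolute weight by a dimension-dependent constant $C(n)$ (the weights still sum to $1$ and each cell value is within $\epsilon$ of $g$ there), and separately kill the contribution of the many far-away cells by sending the steepness parameters to infinity only after the partition is fixed. All of this can be carried out, and would yield a constructive alternative to the paper's argument, but until it is done your proposal does not establish the theorem in its stated generality.
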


In other words, given a $g \in C(I_n) $ and given $\epsilon>0$ there is a sum $\psi(x)$ of the form (\ref{eq:reteSigmoide1_appendix}) such that 
$$|\psi(x) -g(x)|<\epsilon \quad \forall x \in I_n. $$

\begin{proof}
Since $\sigma$ is a continuous function, it follows that the set $U$ of functions of the form (\ref{eq:reteSigmoide1_appendix}) with $\alpha_j,w_{ij},b_{ij}, w_j,b_j \in \mathbb{R}$ and $N \in \mathbb{N}$ is a linear subspace of $C(I_n)$. We claim that the closure of $U$ is all of $C(I_n)$.

Assume that $U$ is not dense in $C(I_n)$, let $S$ be the closure of $U$, $S \neq C(I_n)$.
By the Hahn-Banach theorem (see p. 104 of \cite{rudin_real_complex} ) there is a bounded linear functional on $C(I_n)$, call it $L$, with the property that $L \neq 0 $ but $L(S)=L(U)=0$.

By the Riesz Representation Theorem (see p. 40 of \cite{rudin_real_complex}), the bounded linear functional $L$, is of the form

$$ L(f) = \int_{I_n} f(x) d \mu$$

for some signed regular Borel measures $\mu $ such that $\mu(K) < \infty $ for every compact set $K \subset I_n $ (i.e. $\mu$ is a Radon measure). Hence,

\begin{equation}
\label{integrale_mu}
\int_{I_n} h(x) d \mu =0, \forall h \in U.
\end{equation} 
We shall prove that (\ref{integrale_mu}) implies $\mu=0$, which contradicts the hypothesis $L\neq 0$. 

Using the definition of $U$, equation (\ref{integrale_mu}) can also be written as 
$$ \sum_{j=1}^N \alpha_j \int_{I_n} \sigma(w_j( \sum_{i=1}^n \sigma( w_{ij}( x_i - b_{ij})) - b_j)) d\mu=0,$$
for any choice of $\;\alpha_j, w_{ij}, w_j, b_{ij}, b_j \in \mathbb{R}$ and $N\in \mathbb{N}$.

Note that for any $ w, x, b \in \mathbb{R} $ we have that
the continuous functions 
$$ \sigma_{\lambda} (w(x - b) ) = \sigma(\lambda w( x -b) + \phi )$$
converge pointwise to the unit step function as $\lambda $ goes to infinity, i.e.
$$ \lim_{\lambda \to \infty} \sigma_{\lambda}(w(x-b)) = \gamma(w(x-b))$$
with 
$$
 \gamma(y) = \begin{cases}
  1 & \text{ if } y>0 \\
  \sigma(\phi) & \text{ if } y=0 \\
   0 & \text{ if } y<0
\end{cases} 
$$

By hypothesis is true that for all $\lambda_1, \lambda_2 $ in $\mathbb{R}$
\begin{equation*}
\int_{I_n} \sigma_{\lambda_2}(w_j(\sum_{i=1}^n \sigma_{\lambda_1}(w_{ij} (x_i - b_{ij}))- b_j)) d\mu=0.
\end{equation*}

It follows that for all $\lambda_{2}$:

\begin{equation*}
\lim_{{\lambda_1} \to \infty } 
\int_{I_n} \sigma_{\lambda_2}(w_j(\sum_{i=1}^n \sigma_{\lambda_1}(w_{ij} (x_i - b_{ij}))- b_j)) d\mu=0.
\end{equation*}

Now applying the Dominated Convergence Theorem (see Theorem 11.32 p 321 of \cite{rudin_principles}) and the fact that $\sigma$ is continuous:

\begin{equation*}
    \begin{aligned}
\int_{I_n}  \lim_{{\lambda_1} \to \infty }  \sigma_{\lambda_2}(w_j(\sum_{i=1}^n \sigma_{\lambda_1}(w_{ij}( x_i - b_{ij}))- b_j)) d\mu=\\
\int_{I_n}  \sigma_{  \lambda_2}(w_j(\sum_{i=1}^n  \gamma(w_{ij}( x_i - b_{ij}))- b_j)) d\mu.
   \end{aligned}
\end{equation*}
 
Again, by Dominated Convergence Theorem we have:

\begin{equation*}
    \begin{aligned}
\lim_{\lambda_2 \to \infty} &\int_{I_n}  \sigma_{\lambda_2}(w_j(\sum_{i=1}^n  \gamma(w_{ij}( x_i - b_{ij}))- b_j)) d\mu = \\
   &\int_{I_n}  \gamma (w_j(\sum_{i=1}^n \gamma(w_{ij} (x_i - b_{ij}))- b_j)) ) d \mu.  
    \end{aligned}
\end{equation*}

Hence we have obtained that  $ \forall \alpha_j,w_{ij}, b_{ij},  w_j, b_{j} \in \mathbb{R}$ and $ \forall N \in \mathbb{N}$
$$\int_{I_n} \sum_{j=1}^N \alpha_j \gamma (w_j(\sum_{i=1}^n \gamma(w_{ij} (x_i - b_{ij}))- b_j)) d \mu =0. $$

The function $\gamma$ is very similar to the Heaviside function $H$, the only difference is that $H(0)=1$ while $\gamma(0)=\sigma(\phi)$.
Let $R_i$ denote an open rectangle, $\partial_a R_i$ its left boundary (i.e. the boundary of a left-closed right-open rectangle) and $\partial_b R_i $ its right boundary (i.e. the boundary of a right-closed left-open rectangle).
Repeating the construction seen in Lemma \ref{lemma:Heaviside_stepfunction} to obtain rectangles,  with the difference that here $\gamma$ takes value $\sigma(\phi)$ on the boundaries, we get that 
$$ \sigma(\phi)\mu (\partial _a R_i)+ (1- \sigma(\phi))\mu (\partial _{b}R_i) + \mu (R_i) =0$$
for every open rectangle $R_i.$ 
Taking $\phi \to \infty$, implies
$$ \mu (\partial_{a }R_i) + \mu (R_i) =0 \quad \forall \textrm{ open rectangle } R_i.$$

Every open subset $A$ of $I_n$, can be written as a countable union of disjoint partly open cubes (see Theorem 1.11 p.8 of \cite{zygmund}).
Thus, from the fact that the measure is $\sigma$-additive we have that for every open subset $A$ of $I_n$, $ \mu(A)=0.$
Furthermore $\mu (I_n) =0$. To obtain $I_n$ from $$ \sum_{j=1}^N \alpha_j \gamma (w_j(\sum_{i=1}^n \gamma(w_{ij} (x_i - b_{ij}))- b_j))$$ it is sufficient to choose the parameters so that $w_{ij} (x_i - b_{ij}) >0 \; \forall x_i \in [0,1]$ and so that $w_j, b_j$ maintains the condition on the input. 

Hence, $\mu (A^C) = \mu(I_n ) - \mu (A)=0.$
It follows that for all compact set $K$ of $I_n$,
 $ \mu (K)=0$.

From the regularity of the measure, it follows that $\mu $ is the null measure.



\end{proof} 

 \subsection{$\tanh$-prod \ian}

\begin{theorem}
\label{teo:tanh}
The finite sums of the form 
\begin{equation}
\begin{aligned}
\label{eq:rete_tanh}
        \psi(x)=\sum_{j=1}^N\frac{\alpha_j}{2}\left[\prod_{l=1}^{M_j}\tanh(w_{jl}(z_j(x)-b_{jl}))+1\right] \\
    z_j(x)=\sum_{i=1}^n\frac{1}{2}\left[\prod_{k=1}^{m_i}\tanh(w_{ijk}(x_i-b_{ijk}))+1\right]
    \end{aligned}
\end{equation}
with $w_{jl}, w_{ijk}, \alpha_j, b_{jl}, b_{ijk} \in \mathbb{R}$ and $M_j,N,m_i \in \mathbb{N}$, 
are dense in $C(I_n)$.

In other words given $g \in C(I_n) $ and given $\epsilon>0 $ there is a sum $\psi(x)$ defined as above
such that 
$$|\psi(x) -g(x) | < \epsilon \quad \forall x \in I_n.$$

\end{theorem}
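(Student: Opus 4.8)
The plan is to reduce this theorem to the density result already established for sigmoidal processing functions, Theorem~\ref{teo:sigmoide}. The key observation is purely algebraic: a single rescaled hyperbolic tangent is exactly a logistic sigmoid. Indeed, for any $y \in \mathbb{R}$,
$$\frac{\tanh(y)+1}{2} = \frac{1}{2}\left(\frac{e^{2y}-1}{e^{2y}+1}+1\right) = \frac{e^{2y}}{e^{2y}+1} = \frac{1}{1+e^{-2y}} = \sigma(2y),$$
so the $M=1$ rescaled-$\tanh$ processing function $\frac{1}{2}[\tanh(w(x-b))+1]$ coincides with $\sigma\bigl(2w(x-b)\bigr)$, where $\sigma$ is the continuous logistic sigmoid. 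The rescaling in (\ref{rescaled_prod_tanh}) was in fact chosen precisely so that one factor collapses to a sigmoid.

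First I would restrict the family (\ref{eq:rete_tanh}) to the subfamily obtained by taking $M_j=1$ for every $j$ and $m_i=1$ for every $i$. Applying the identity above to both the inner and the outer tanh, a member of this subfamily reads
$$\psi(x)=\sum_{j=1}^N \alpha_j\,\sigma\!\left(2w_{j1}\Bigl(\sum_{i=1}^n \sigma\bigl(2w_{ij1}(x_i-b_{ij1})\bigr)-b_{j1}\Bigr)\right).$$
Then I would match this term-by-term with the sigmoid network (\ref{eq:reteSigmoide1_appendix}) by setting $w_j=2w_{j1}$, $w_{ij}=2w_{ij1}$, $b_j=b_{j1}$, $b_{ij}=b_{ij1}$, keeping the same $\alpha_j$ and $N$. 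Since $w_{j1}$ and $w_{ij1}$ range over all of $\mathbb{R}$, so do $2w_{j1}$ and $2w_{ij1}$; hence this subfamily of (\ref{eq:rete_tanh}) is \emph{exactly} the family of sigmoid networks (\ref{eq:reteSigmoide1_appendix}) with $\sigma$ the logistic sigmoid.

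Finally, since the logistic sigmoid is a continuous sigmoidal function in the sense of the preceding definition, Theorem~\ref{teo:sigmoide} guarantees that this subfamily is already dense in $C(I_n)$. As the full $\tanh$-prod family (\ref{eq:rete_tanh}) contains it, and any superset of a dense set is dense, the finite sums of the form (\ref{eq:rete_tanh}) are dense in $C(I_n)$. The only real obstacle is spotting the reduction: once one recognizes that a single rescaled factor is a logistic sigmoid, no new approximation-theoretic argument is needed, and the additional factors ($M_j,m_i>1$)—which can only enlarge the family—are irrelevant for establishing density.
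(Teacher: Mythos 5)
Your proof is correct, but it takes a genuinely different route from the paper's. You collapse the problem algebraically: since $\tfrac{1}{2}\left(\tanh(y)+1\right)=\sigma(2y)$ for the logistic sigmoid $\sigma$, the subfamily of (\ref{eq:rete_tanh}) with $M_j=m_i=1$ coincides \emph{exactly} with the sigmoid networks (\ref{eq:reteSigmoide1_appendix}), and density follows from Theorem~\ref{teo:sigmoide} together with the fact that a superset of a dense set is dense --- a legitimate move here because the theorem quantifies $M_j, m_i \in \mathbb{N}$ as parameters of the family, so the $M_j=m_i=1$ networks are members of it. The paper instead re-runs the functional-analytic argument of Theorem~\ref{teo:sigmoide} in the product setting: assuming non-density, Hahn--Banach and Riesz representation produce an annihilating Radon measure, and Remark~\ref{remark_ortanti} shows that, choosing the parameters so that only one $\tanh$ factor is \emph{relevant} (the others saturate to $1$ in the limit $\lambda \to \infty$), the networks converge pointwise to step-like functions, after which the dominated-convergence and rectangle argument of the sigmoid proof applies verbatim. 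Your reduction is shorter, more elementary, and avoids duplicating the measure-theoretic machinery. What the paper's route buys is robustness to fixing the architecture: because the superfluous factors are only saturated in a limit rather than deleted, essentially the same argument covers the variant in which the number of $\tanh$ factors per neuron is fixed a priori (e.g. $M=2$, as used in the experiments), whereas your containment argument cannot reach that case directly --- no finite choice of parameters makes a $\tanh$ factor identically $1$, so the $M_j=m_i=1$ networks are not literally members of a fixed-$M\geq 2$ family, and one would need an additional uniform-approximation step to transfer density there.
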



Since $\tanh$ is a continuous function, it follows that the family of functions defined by equation (\ref{eq:rete_tanh}) is a linear subspace of $C(I_n)$. To prove that it is dense in $C(I_n)$ we will use the same argument we used for the continuous sigmoidal functions.

This is, called $U$ the set of functions of the form (\ref{eq:rete_tanh}), we assume that $U$ is not dense in $C(I_n)$.
Thus, by the Hahn-Banach theorem there exists a not null bounded linear functional on $C(I_n)$ with the property that it is zero on the closure of $U$.
By the Riesz Representation Theorem, the bounded linear functional can be represented by a Radon measures. Then using the definition of $U$ we will see that this measure must be the zero measure, hence the functional associated with it is null contradicting the hypothesis.

We define
\begin{equation}
    \label{eq:1layer_tanh}
h_{\lambda}(x)=\frac{1}{2}\left[\prod_{k=1}^{m}\tanh(\lambda(w_{k}(x-b_{k}))+ \phi)+1\right]. 
\end{equation}
To proceed with the proof as in the case of the proof for continuous sigmoidal functions, we need only to understand to what converges the function
\begin{equation}
\label{eq:limite_rete_tanh}
        \psi_{\lambda_2, \lambda_1}(x)=\sum_{j=1}^N\frac{\alpha_j}{2}h_{j\lambda_2}\left( \sum_{i=1}^n h_{i\lambda_1}(x)\right)
\end{equation}

when $\lambda_1$ and $\lambda_2$ tend to infinity, and $h_{i\lambda}$ indicates the processing function related to input $i$.

Once we have shown that for some choice of the parameters they converge pointwise to step function we can use the same argument we used in the proof of Theorem \ref{teo:sigmoide}.

The first step is therefore to study the limit of equation (\ref{eq:limite_rete_tanh}). Let us focus of the multiplication of $\tanh$ in the first layer, given by equation (\ref{eq:1layer_tanh}).

The pointwise limit of $h_{\lambda}(x)$ for $\lambda \to \infty$ depends on the sign of the limit of the product of $\tanh$, that in turn depends on the sign of $w_k(x-b_k) $ for $k  \in \{1,..., m\}$.

\begin{remark}
\label{remark_ortanti}
We remark that for $x \in [0,1]$, from the limit of equation (\ref{eq:1layer_tanh}) we can obtain the indicator functions of set of the form $x>b$ or $x<b$ for any $b \in \mathbb{R}$.
We just have to choose the parameters in such a way that only one of the $\tanh$ in the multiplication is relevant. Let us define $Z=\{k \in \{1,...,m \}: w_{k}(x -b_{k}) >0 \quad \forall x \in[0,1] \}$.
If $|Z|=m-1$, i.e. there is only one $i \in \{1,...,m\}$ so that its weight are significant it holds that
$$\lim_{\lambda \to \infty} h_{\lambda}(x) = \upsilon(x)=
\begin{cases}
    1 & \text{ if } w_{i}(x-b_{i})>0 \\
    \sigma(2\phi) & \text{ if }  w_{i}(x-b_{i})=0\\
    0 & \text{ if } w_{i}(x-b_{i})<0
    \end{cases}
$$

taking into account that $\sigma(2\phi) = \frac{1}{2}\left(\tanh(\phi)+1\right)$.
\end{remark}

\begin{proof}[Proof of Theorem \ref{teo:tanh}]
Considering Remark \ref{remark_ortanti}, the proof of Theorem \ref{teo:tanh} is analogous to that of Theorem \ref{teo:sigmoide}.
\end{proof}


\section{Experimental settings}

All code was written in Python Programing Language. In particular, the following libraries were used for the algorithms: tensorflow for neural networks, scikit-learn for Logistic Regression, Decision Trees and Gradient Boosting Decision Trees.

A small exploration was made to determine the best structure of the neural network for each dataset. We used a breadth-first search algorithm defined as follows. We started with a network with just one neuron, we trained it and evaluated its performance. At each step, we can double the number of neurons in each layer except the output one or increase the depth of the network by adding a layer with one neuron. For each new configuration, we build a new structure based on it, initialize it and train it. If the difference between the accuracy achieved by the new structure and that of the previous step is lower than $1\%$, then a patience parameter is reduced by $1$. The patience parameter is initialized as $5$ and is passed down from a parent node to its spawned children, so that each node has its own instance of it. When patience reach $0$, that configuration will not spawn new ones. 

Before the neural network initialization, a random seed was set in order to reproduce the same results. As for the initialization of \ian, the weights $w$ are initialised using the glorot uniform. For the biases $b$ of the first layer a uniform between the minimum and the maximum of each feature was used, while for the following layers a uniform between the minimum and the maximum possible output from the neurons of the previous layer was used.

For the network training, Adam with a learning rate equal to $0.1$ was used as optimization algorithm. The loss used is the binary or categorical crossentropy, depending on the number of classes in the dataset. In the calculation of the loss, the weight of each class is also taken into account, which is inversely proportional to the number of samples of that class in the training set.
The maximum number of epochs for training has been fixed at $10000$. To stop the training, an early stopping method was used based on the loss calculated on the train. The patience of early stopping is $250$ epochs, with the variation that in these epochs the loss must decrease by at least $0.01$. Not using a validation dataset may have led to overfitting of some structures, so in future work we may evaluate the performance when using early stopping based on a validation loss. The batch size was fixed at $128$ and the training was performed on CPU or GPU depending on which was faster considering the amount of data. The Heaviside was trained as if its derivative was the same as the sigmoid.

For Decision Trees (DT) and Gradient Boosting Decision Trees (GBDT), an optimisation of the hyperparameters was carried out, in particular for min\_samples\_split (between 2 and 40) and min\_samples\_leaf (between 1 and 20). For GBDT, $1000$ estimators were used, while for DT the class\_weight parameter was set. For the rest of the parameters, we kept the default values of the python sklearn library.

\section{Datasets}
$19$ out of $23$ datasets are publicly available, either on the UCI Machine Learning Repository website or on the Kaggle website. Here we present a full list of the datasets used, correlated with their shortened and full-lenght name, and the corresponding webpage where the description and data can be found.

\begin{table*}[!ht]
\centering
\begin{tabular}{l|l|l}
Short name & Full-length name                             & {\color[HTML]{333333} Webpage}                                                                                    \\ \hline
adult      & Adult                                        & $<$UCI\_MLR\_URL$>$/adult                                        \\
australian & Statlog (Australian Credit Approval)         & $<$UCI\_MLR\_URL$>$/statlog+(australian+credit+approval)         \\
b-c-w      & Breast Cancer Wisconsin                      & $<$UCI\_MLR\_URL$>$/Breast+Cancer+Wisconsin+(Diagnostic)         \\
car        & Car Evaluation                               & $<$UCI\_MLR\_URL$>$/car+evaluation                               \\
cleveland  & Heart Disease                                & $<$UCI\_MLR\_URL$>$/heart+disease                                \\
crx        & Credit Approval                              & $<$UCI\_MLR\_URL$>$/credit+approval                              \\
diabetes   & Diabetes                                     & https://www.kaggle.com/uciml/pima-indians-diabetes-database                         \\
german     & Statlog (German Credit Data)                 & $<$UCI\_MLR\_URL$>$/statlog+(german+credit+data)                 \\
glass      & Glass Identification                         & $<$UCI\_MLR\_URL$>$/glass+identification                         \\
haberman   & Haberman's Survival                          & $<$UCI\_MLR\_URL$>$/haberman\%27s+survival                       \\
heart      & Statlog (Heart)                              & $<$UCI\_MLR\_URL$>$/statlog+(heart)                             \\
hepatitis  & Hepatitis                                    & $<$UCI\_MLR\_URL$>$/hepatitis                                    \\
image      & Statlog (Image Segmentation)                 & $<$UCI\_MLR\_URL$>$/Statlog+(Image+Segmentation)                 \\
ionosphere & Ionosphere                                   & $<$UCI\_MLR\_URL$>$/ionosphere                                   \\
iris       & Iris                                         & $<$UCI\_MLR\_URL$>$/iris                                         \\
monks-1    & MONK's Problems                              & $<$UCI\_MLR\_URL$>$/MONK\%27s+Problems                           \\
monks-2    & MONK's Problems                              & $<$UCI\_MLR\_URL$>$/MONK\%27s+Problems                          \\
monks-3    & MONK's Problems                              & $<$UCI\_MLR\_URL$>$/MONK\%27s+Problems                         \\
sonar      & Connectionist Bench & $<$UCI\_MLR\_URL$>$/Connectionist+Bench+(Sonar,+Mines+vs.+Rocks)
\end{tabular}
\caption{Publicly available datasets, with the short name used in in our work, their full-lenght name and the webpage where data and description can be found. The UCI\_MLR\_URL is the following: https://archive.ics.uci.edu/ml/datasets/}
\end{table*}

The $4$ synthetic datasets of our own creation are composed of $1000$ samples with $2$ variables generated as random uniforms between $-1$ and $1$ and an equation dividing the space into $2$ classes.
The $4$ equations used are:
\begin{itemize}
    \item bisector: $x_1>x_2$
    \item xor: $x_1>0 \land x_2>0$
    \item parabola: $x_2 < 2x_1^2 - \frac{1}{2}$
    \item circle $x_1^2 + x_2^2 < \frac{1}{2}$
\end{itemize}

These datasets are also represented in Figure \ref{custom_datasets}.

\begin{figure}[ht]
    \centering
    \begin{subfigure}[h]{0.475\columnwidth}
        \centering
        \includegraphics[width=\textwidth]{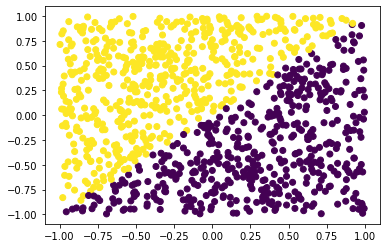}
        \caption{Bisector}    
    \end{subfigure}
    \hfill
    \begin{subfigure}[h]{0.475\columnwidth}  
        \centering 
        \includegraphics[width=\textwidth]{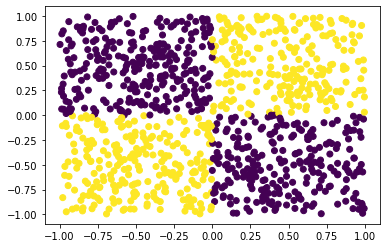}
        \caption{XOR}
    \end{subfigure}
    \vskip\baselineskip
    \begin{subfigure}[h]{0.475\columnwidth}   
        \centering 
        \includegraphics[width=\textwidth]{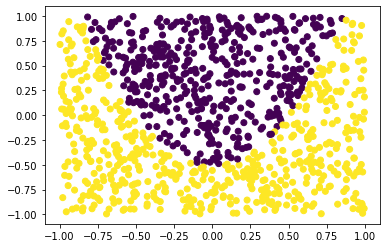}
        \caption{Parabola}  
    \end{subfigure}
    \hfill
    \begin{subfigure}[h]{0.475\columnwidth}   
        \centering 
        \includegraphics[width=\textwidth]{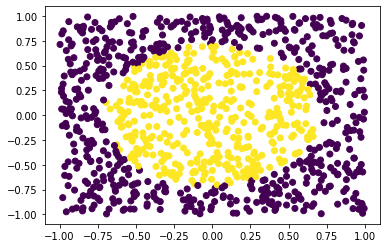}
        \caption{Circle}   
    \end{subfigure}
    \caption{The synthetically generated datasets we used to assess the soundness of our methodology.} 
    \label{custom_datasets}
\end{figure}

\section{Examples}

\subsection{Heart dataset - Heaviside \ian}

\begin{figure*}[!ht]
    \centering
    \includegraphics[width=0.99\textwidth]{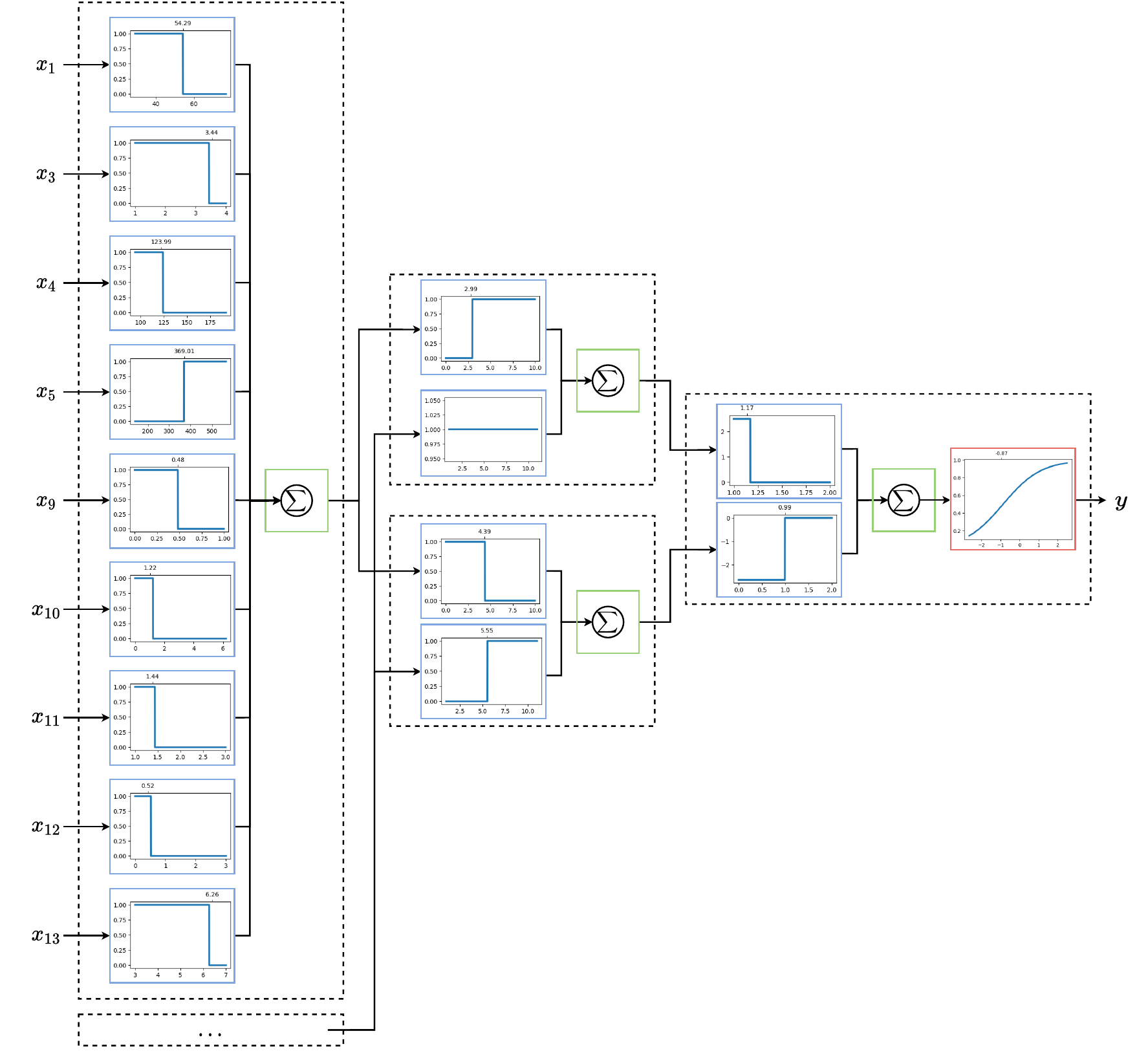}
    \caption{The Heaviside \ian\ Network trained on the heart dataset. The Figure follows the color convention used for \newron.}
    \label{heart_heaviside}
\end{figure*}

The Statlog Heart dataset is composed of $270$ samples and $13$ variables of medical relevance. The dependent variable is whether or not the patient suffers from heart disease.
In Figure \ref{heart_heaviside} you can find the network based on Heaviside \ian\ trained on the heart dataset. Only the inputs with a relevant contribution to the output are shown. From now on, we will indicate with $R_{k,j,i}$ the rule related to the processing function corresponding to the $i$-th input, of the $j$-th neuron, of the $k$-th layer.
From the first neuron of the first layer we can easily retrieve the following rules: $R_{1,1,1} = x_1 \leq 54.29, R_{1,1,3} = x_3 \leq 3.44, R_{1,1,4} = x_4 \leq 123.99, R_{1,1,5} = x_5 \geq 369,01, R_{1,1,9} = x_9 \leq 0.48, R_{1,1,10} = x_{10} \leq 1.22, R_{1,1,11} = x_{11} \leq 1.44, R_{1,1,12} = x_{12} \leq 0.52, R_{1,1,13} = x_{13} \leq 6.26$. The second neuron of the first layer is not shown for lack of space, but its obtained rules are $R_{1,2,2} = x_2 \geq 0.79, R_{1,2,3} = x_3 \geq 3.59, R_{1,2,4} = x_4 \geq 99.95, R_{1,2,5} = x_5 \geq 253.97, R_{1,2,8} = x_8 \leq 97.48, R_{1,2,9} = x_9 \leq 0.04, R_{1,2,10} = x_{10} \geq 2.56, R_{1,2,11} = x_{11} \geq 1.53, R_{1,2,12} = x_{12} \geq 0.52, R_{1,2,13} = x_{13} \geq 5.47$. Moreover, input $x_7$ gives always $1$, so this must be taken into consideration in the next layer.

Moving on to the second layer, we can see in the first neuron that the second input is irrelevant, since the Heaviside is constant. The first processing function activates if it receives an input that is greater or equal to $2.99$. Given that the input can only be an integer, we need at least $3$ of the rules obtained for the first neuron of the first layer to be true: $R_{2,1,1} = 3-of-\{R_{1,1,i}\}$. Following the same line of reasoning, in the second neuron of the second layer we see that we get $R_{2,2,1} = 5-of-\{\neg R_{1,1,i}\}$ and $R_{2,2,2} = 5-of-\{R_{1,2,i}\}$ ($5$ and not $6$ because of $x_7$ processing function).
 
In the last layer, the first processing function has an activation of around $2.5$ if it receives an input that's less than $1.17$. This can happen only if $R_{2,1,1}$ does not activate, so we can say: $R_{3,1,1} = \neg R_{2,1,1} = 7-of-\{\neg R_{1,1,i}\}$. The second processing function gives a value of around $-2.5$ only if it gets an input less than $0.99$, so only if the second neuron of the second layer does not activate. This means that $R_{2,2,1}$ and $R_{2,2,2}$ must be both false at the same time, so we get $R_{3,1,2} = \neg R_{2,2,1} \land \neg R_{2,2,2} = 5-of-\{R_{1,1,i}\} \land 6-of-\{\neg R_{1,2,i}\}$.
Now there are $4$ cases for the sum, i.e. the combinations of the 2 activations: $\{0+0, 2.5+0, 0-2.5, 2.5-2.5\} = \{-2.5, 0, 2.5\}$. Given that both have around the same value for the $\alpha$ parameter, the set is reduced to two cases. Looking at the processing function, we can see that is increasing with respect to the input, so since $\alpha_1$ is positive, we can say that rule $R_{3,1,1}$ is correlated to class $1$, while rule $R_{3,1,2}$, having a negative $\alpha_2$, has an opposite correlation. Looking at its values, we can see that for both $0$ and $2.5$ inputs, the activation function gives an output greater than $0.5$. If we consider this as a threshold, we can say that only for an input of $-2.5$ we get class $0$ as prediction. This happens only if $R_{3,1,2}$ is true and $R_{3,1,1}$ is false. Summarizing we get $R_0 = R_{3,1,2} \land \neg R_{3,1,1} = 5-of-\{R_{1,1,i}\} \land 6-of-\{\neg R_{1,2,i}\} \land 3-of-\{R_{1,1,i}\} = 5-of-\{R_{1,1,i}\} \land 6-of-\{\neg R_{1,2,i}\}$, so that we can say ``if $R_0$ then predicted class is $0$, otherwise is $1$''.

Although we are not competent to analyse the above results from a medical perspective, it is interesting to note for example that the variables $x_1$ and $x_4$, representing age and resting blood pressure respectively, are positively correlated with the presence of a heart problem.

\subsection{Xor - sigmoid \ian}

\begin{figure*}[ht]
    \centering
    \includegraphics[width=0.99\textwidth]{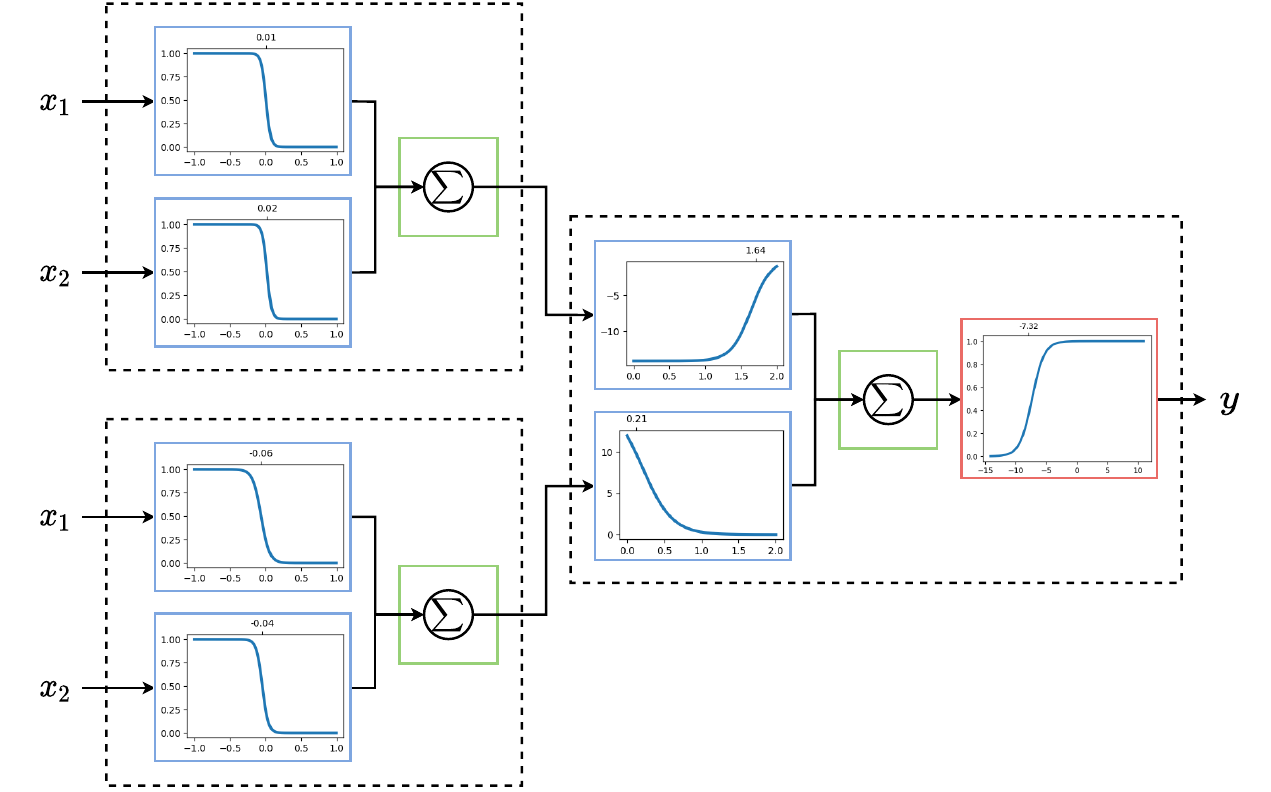}
    \caption{The sigmoid \ian\ Network trained on the xor dataset. The Figure follows the color convention used for \newron.}
    \label{xor_sigmoid}
\end{figure*}

Our custom xor dataset divides the 2D plane in quadrants, with the opposites having the same label.

The network based on sigmoid \ian\ trained on xor dataset is represented in Figure \ref{xor_sigmoid}. As we can see, all the processing functions of the first layer converged to nearly the same shape: a steep inverted sigmoid centered in $0$. Therefore, we can say the rules obtained are $R_{1,1,1} = R_{1,2,1} = x_1 \leq 0$ and $R_{1,1,2} = R_{1,2,2} = x_2 \leq 0$.
In the last layer, the first processing function has a value of about $-15$ for inputs in $[0,1]$, then it starts growing slowly to reach almost $0$ for an input of $2$. This tells us that it doesn't have an activation if both rules of the first neuron are true, so if $x_1 \leq 0 \land x_2 \leq 0$.
On the other hand, the second processing function has no activation if its input greater than $1$, that happens for example if we have a clear activation from at least one of the inputs in the second neuron of the first layer. So looking at it the opposite way, we need both those rules to be false ($x_1>0 \land x_2>0$) to have an activation of $12.5$.
The activation function is increasing with respect to the input, and to get a clear class $1$ prediction, we need the input to be at least $-5$. Considering if the processing functions could give only $\{-15,0\}$ and $\{12.5,0\}$ values, just in the case we got $-15$ from the first one and $0$ from the second one ot would give us a clear class $0$ prediction. This happens only if $\neg (x_1 \leq 0 \land x_2 \leq 0) = x_1>0 \lor x_2>0$ and $\neg (x_1>0 \land x_2>0) = x_1 \leq 0 \lor x_2 \leq 0$, that can be summarised $(x_1>0 \lor x_2>0) \land (x_1 \leq 0 \lor x_2 \leq 0) = (x_1>0 \land x_2\leq0) \land (x_1 \leq 0 \lor x_2 > 0)$. Since this rule describes the opposite to xor, for class $1$ we get the exclusive or logical operation.

\subsection{Iris dataset - $\tanh$-prod \ian}

\begin{figure*}[ht]
    \centering
    \includegraphics[width=0.99\textwidth]{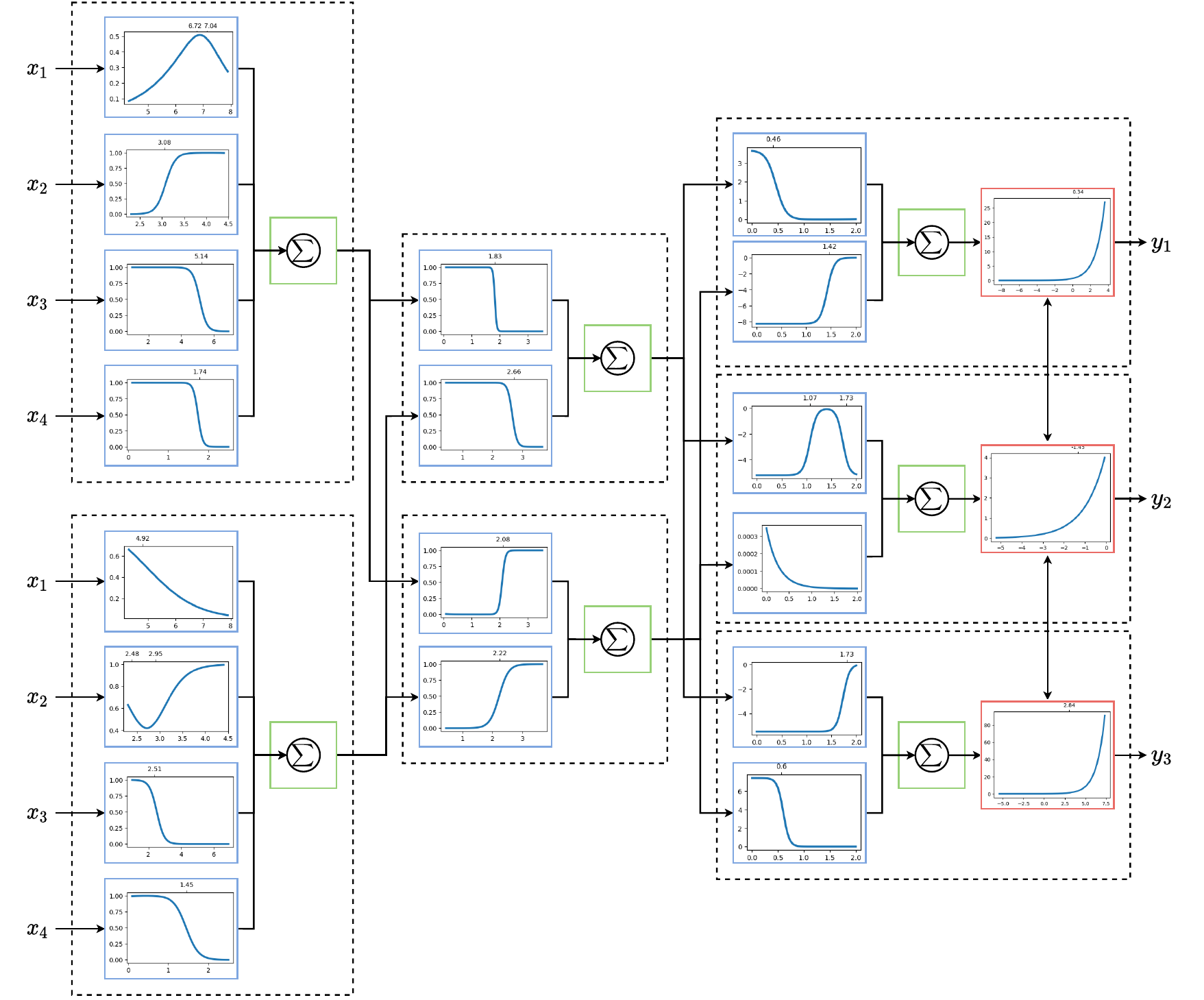}
    \caption{The $\tanh$-prod \ian\ Network trained on the iris dataset. The Figure follows the color convention used for \newron.}
    \label{iris_network}
\end{figure*}

A dataset widely used as a benchmark in the field of machine learning is the Iris dataset. This contains 150 samples, divided into 3 classes (setosa, versicolor and virginica) each representing a type of plant, while the 4 attributes represent in order sepal length and width and petal length and width.

In Figure \ref{iris_network} you can see the final composition of the network generated with the $\tanh$-prod2 IAN neuron.

Considering the first neuron of the first layer, we see that it generates the following fuzzy rules: $R_{1,1,2} = x_2 > 3.08$ (sepal width), $R_{1,1,3} = x_3 < 5.14$ (petal length) and $ R_{1,1,4} = x_4< 1.74$ (petal width). For the first attribute (sepal length) it does not generate a clear rule, but forms a bell shape, reaching a maximum of $0.5$. This tells us that $x_1$ is less relevant than the other attributes, since, unlike the other processing functions, it does not reach $1$.

The second neuron has an inverse linear activation for the first attribute, starting at $0.7$ and reaching almost $0$. The second attribute also has a peculiar activation, with an inverse bell around $2.8$ and a minimum value of $0.4$. The third and fourth attributes have clearer activations, such as $R_{1,2,3} = x_3 < 2.51$ and $R_{1,2,4} = x_4< 1.45$.

The fact that petal length and width are the ones with the clearest activations and with those specific thresholds are in line with what has previously been identified on the Iris dataset by other algorithms.

We denote by $y_{k,j}$ the output of the $j$-th neuron of the $k$-th layer. Moving on to the second layer, the first neuron generates the rules ``if $y_{1,1} < 1.83$'' and ``if $ y_{1,2} < 2.66$'', while the second one generates ``if $y_{2,1} > 2.08$'' and ``if $y_{2,2} > 2.22$''.
Combined with what we know about the previous layer, we can deduce the following: $y_{1,1}$ is less than $1.83$ only if the sum of the input activation functions is less than $1.83$, which only happens if no more than one of the last three rules is activated ($0 + 1 + 0 < 1.83$), while the first one, even taking its maximum value, is discriminative only when the input of one of the other rules is close to the decision threshold ($0.5 + 1 + 0 + 0 < 1.83$, while $0.5 + 1 + 0.5 + 0 > 1.83$). For $y_{1,2} < 2.66$, there are more cases. We can divide the second processing function of the second neuron of the first layer in two intervals: one for which $x_2<3.2$ and the other when $x_2\geq3.2$. In the first interval, the processing function gives a value that is less than $0.66$, greater in the second one. With this, we can say that $y_{1,2} < 2.66$ even if $R_{1,2,3}$ and $R_{1,2,4}$ activates, if $x_2<3.2$ and $x_1$ is near its maximum.
 
In the second neuron of the second layer, the first processing function is nearly the exact opposite to that of the other neuron; we need at least two of $R_{1,1,2}$, $R_{1,1,3}$ or $R_{1,1,4}$ to be true, while $R_{1,1,1}$ still doesn't have much effect. The second processing function gives us $y_{1,2}>2.22$. Considering that the minimum for the processing function related to $x_2$ is $0.4$, we may need both rules $R_{1,2,3}$ and $R_{1,2,4}$ to be true to exceed the threshold, or just one of them active and $x_1$ to take on a low value and $x_2$ to be a high value.

For the last layer, remember that in this case since there are more than $2$ classes, a softmax function is used to calculate the output probability, hence the arrows in the figure that join the layers of the last layer.

For the first output neuron, in order to obtain a clear activation, we need the first input to be less than $0.46$ and the second greater than $1.42$. This is because the $\alpha_i$ are $3$ and $-8$, and the output activation function starts to have an activation for values greater than $-2$. This means that the first neuron of the second layer should hardly activate at all, while the other should activate almost completely. Considering the thresholds for $y_{1,1}$ and $y_{1,2}$, we need the first to be greater than $2.08$ and the other to be greater than $2.66$. So $R_{3,1,1} = 2-of-\{x_2>3.08, x_3<5.14,x_4<1.74\}$. For $R_{3,1,2}$ is more tricky to get a clear decision rule, but we can say that we may need both $R_{1,2,3}$ and $R_{1,2,4}$ to be true and $x_2\geq 3.2$. If $x_2<3.2$, we need $x_1$ to not be near its maximum value. If just one of those two rules is true, we need $x_2<3.2$ and $x_1$ near $4$, or $x_2>3.2$ but with a (nearly) direct correlation with $x_1$, such that the more $x_1$ increases, the same does $x_2$.

In the second output neuron, the second processing function is negligible, while the first one forms a bell shape between $1$ and $2$. This means that it basically captures when $y_{2,1}$ has a value of approximately $1.5$, so when the decision is not clear. This is what gives this neuron maximum activation.

In the third and last output layer, since the first  processing function has a negative $\alpha$ parameter and the activation function is increasing with respect to the input, we want it to output $0$, and this requires maximum activation for the first neuron of the second layer. Regarding the second processing function, we want it to output $8$, so we need nearly no activation from the second neuron of the second layer. So we need the first neuron of the first layer to output a value lower than $1.83$ and the second neuron to output a value lower than $2.22$. This means that no more than one rule $R_{1,1,i}$ needs to be active and at most two rules of $R_{1,2,i}$ need to be true.

We can conclude by saying that both neurons of the first layer are positively correlated with class $1$, while they are negatively correlated with class $3$. This means that low values of $x_3$ and $x_4$, or high values of $x_2$ increase the probability of a sample to belong to class $1$, while $x_1$ has almost no effect. For class $2$, what we can say is that it correlates with a non-maximum activation of both neurons of the first layer, meaning that it captures those cases in which the prediction of one of the other classes is uncertain.

\end{document}